\documentclass[lettersize,journal]{IEEEtran}
\usepackage[caption=false,font=normalsize,labelfont=sf,textfont=sf]{subfig}
\usepackage{multicol}
\usepackage[bookmarks=true]{hyperref}

\usepackage{bm}
\usepackage{amssymb}
\usepackage{dsfont}
\usepackage{amsmath}

\usepackage{algorithm}
\usepackage{algorithmicx}
\usepackage{algpseudocode}

\usepackage{amsthm}
\usepackage{xcolor}
\usepackage{graphicx}
\usepackage{url}
\usepackage{svg}
\usepackage{nicefrac}
\usepackage{balance}
\usepackage{mathtools}
\usepackage{booktabs}
\usepackage{siunitx}
\usepackage{multirow} 
\usepackage{pgfplots}
\usepackage{pgfplotstable}
\usepackage{tikz}
\pgfplotsset{compat=newest}
\usepgfplotslibrary{groupplots}
\usepgfplotslibrary{units}
\usepgfplotslibrary{statistics}

\definecolor{customred}{RGB}{220,33,77}
\definecolor{lightred}{RGB}{255,150,150}
\definecolor{customblue}{RGB}{0,100,222}
\definecolor{lightblue}{RGB}{150,200,255}
\definecolor{customgray}{RGB}{120,120,120}
\definecolor{lightgray}{RGB}{220,220,220}

\newtheoremstyle{exampstyle}
  {3pt} % Space above
  {3pt} % Space below
  {\itshape} % Body font
  {} % Indent amount
  {\bfseries} % Theorem head font
  {.} % Punctuation after theorem head
  {.5em} % Space after theorem head
  {} % Theorem head spec (can be left empty, meaning `normal')

\theoremstyle{exampstyle} 
\newtheorem{definition}{Definition}
\newtheorem{lemma}{Lemma}
\newtheorem{theorem}{Theorem}
\newtheorem{remark}{Remark}
\newtheorem{assumption}{Assumption}
\newtheorem{problem}{Problem}
\newtheorem{proposition}{Proposition}
\newtheorem{corollary}{Corollary}
\newtheorem{example}{Example}
\newtheorem{example*}{Example*}

\theoremstyle{plain}

\newcommand{\RNum}[1]{\uppercase\expandafter{\romannumeral #1\relax}}
\newcommand{\ofx}{\left(\bm{x}\right)}

\newcommand{\ofxtau}{\left(\bm{x}_{\tau}\right)}
\newcommand{\xb}{\bm{x}}
\newcommand{\ub}{\bm{u}}
\newcommand{\zb}{\bm{z}}
\newcommand{\bb}{\bm{b}}

\algtext*{EndFor}
\algtext*{EndIf}

\begin{document}

\title{
Safety-critical Control Under Partial Observability: \\Reach-Avoid POMDP meets Belief Space Control}
% \title{Belief Control Lyapunov and Barrier Functions: A Layered Architecture for Safe Robot Decision Making under Uncertainty}

\author{Matti Vahs, Joris Verhagen and Jana Tumova
\thanks{This work was partially supported by the Wallenberg AI, Autonomous
		Systems and Software Program (WASP) funded by the Knut and Alice
		Wallenberg Foundation. This research has been carried out as part of the Vinnova Competence Center for Trustworthy Edge Computing Systems and Applications at KTH Royal Institute of Technology.
  }
	\thanks{The authors are with the Division of Robotics, Perception and Learning, KTH Royal Institute of Technology, Stockholm, Sweden and also affiliated with Digital Futures. Mail addresses: {\{\tt\small vahs, jorisv, tumova\}}
		{\tt\small @kth.se}}
}

% The paper headers
% \markboth{Journal of \LaTeX\ Class Files,~Vol.~14, No.~8, August~2021}%
% {Shell \MakeLowercase{\textit{et al.}}: A Sample Article Using IEEEtran.cls for IEEE Journals}

% \IEEEpubid{0000--0000/00\$00.00~\copyright~2021 IEEE}
% Remember, if you use this you must call \IEEEpubidadjcol in the second
% column for its text to clear the IEEEpubid mark.

\maketitle

\begin{abstract}
% POMDPs provide a principled framework for robot decision-making under partial observability, but in reach-avoid settings they require three competing behaviors with conflicting time-scale requirements: goal reaching, safety, and information gathering; the latter because neither safety nor goal reaching can be certified under high uncertainty. Existing online solvers typically combine these objectives within a single planning process, which can lead to conflicts in the objectives.
% We propose a layered, certificate-based control architecture for reach-avoid POMDPs that operates directly in belief space. We decouple control into three modules: a state-based reference controller for goal reaching, a belief Control Lyapunov Function (BCLF) for information gathering, and a belief Control Barrier Function (BCBF) for risk-aware safety. This separation allows controllers to operate at different frequencies with each module providing their own probabilistic guarantees.
% We formulate information gathering as a Lyapunov convergence problem in belief space and show how stochastic and finite-time BCLFs can be learned via reinforcement learning. For safety, we extend existing belief CBFs with conformal prediction to obtain finite-horizon probabilistic safety guarantees. Experiments in simulation and on a space-robotics platform demonstrate real-time performance and improved safety and task success compared to state-of-the-art constrained POMDP solvers.
Partially Observable Markov Decision Processes (POMDPs) provide a principled framework for robot decision-making under uncertainty. Solving reach-avoid POMDPs, however, requires coordinating three distinct behaviors: goal reaching, safety, and active information gathering to reduce uncertainty. Existing online POMDP solvers attempt to address all three within a single belief tree search, but this unified approach struggles with the conflicting time scales inherent to these objectives. We propose a layered, certificate-based control architecture that operates directly in belief space, decoupling goal reaching, information gathering, and safety into modular components. We introduce Belief Control Lyapunov Functions (BCLFs) that formalize information gathering as a Lyapunov convergence problem in belief space, and show how they can be learned via reinforcement learning. For safety, we develop Belief Control Barrier Functions (BCBFs) that leverage conformal prediction to provide probabilistic safety guarantees over finite horizons. The resulting control synthesis reduces to lightweight quadratic programs solvable in real time, even for non-Gaussian belief representations with dimension $>10^4$. Experiments in simulation and on a space-robotics platform\footnote{Video available at \href{https://youtu.be/0uxGLdUVQ10}{https://youtu.be/0uxGLdUVQ10}} demonstrate real-time performance and improved safety and task success compared to state-of-the-art constrained POMDP solvers.
\end{abstract}

\section{Introduction}

% Paragraph 1: Uncertainty is everywhere in robotics (unchanged, it's fine)
Uncertainty is a defining challenge in real-world robotic systems. Robots operate with noisy sensors, partial observability, and imperfect models of their own dynamics and environments. As a result, the true system state is rarely known exactly and must instead be inferred from noisy observations. 
Deploying robots under such uncertainties requires decision-making that accounts for incomplete state information and actively reasons about how uncertainty evolves over time.

% Paragraph 2: POMDPs and reach-avoid — TWO objectives, not three yet
A natural framework for reasoning under such uncertainty is the Partially Observable Markov Decision Process (POMDP)\cite{kurniawati2022partially}. A POMDP maintains a belief, i.e.\ a probability distribution over states, and seeks a policy that maps this belief directly to control actions. In this work, we focus on reach-avoid POMDPs, in which the robot must reach a desired goal set while avoiding unsafe states with high probability. Fig.\ref{fig:FirstPage} illustrates such a scenario: starting from an uncertain position (any blue particle), the robot must navigate to the goal region (green) without entering unsafe areas (red).

\begin{figure}
    \centering
    \includegraphics[width=\linewidth]{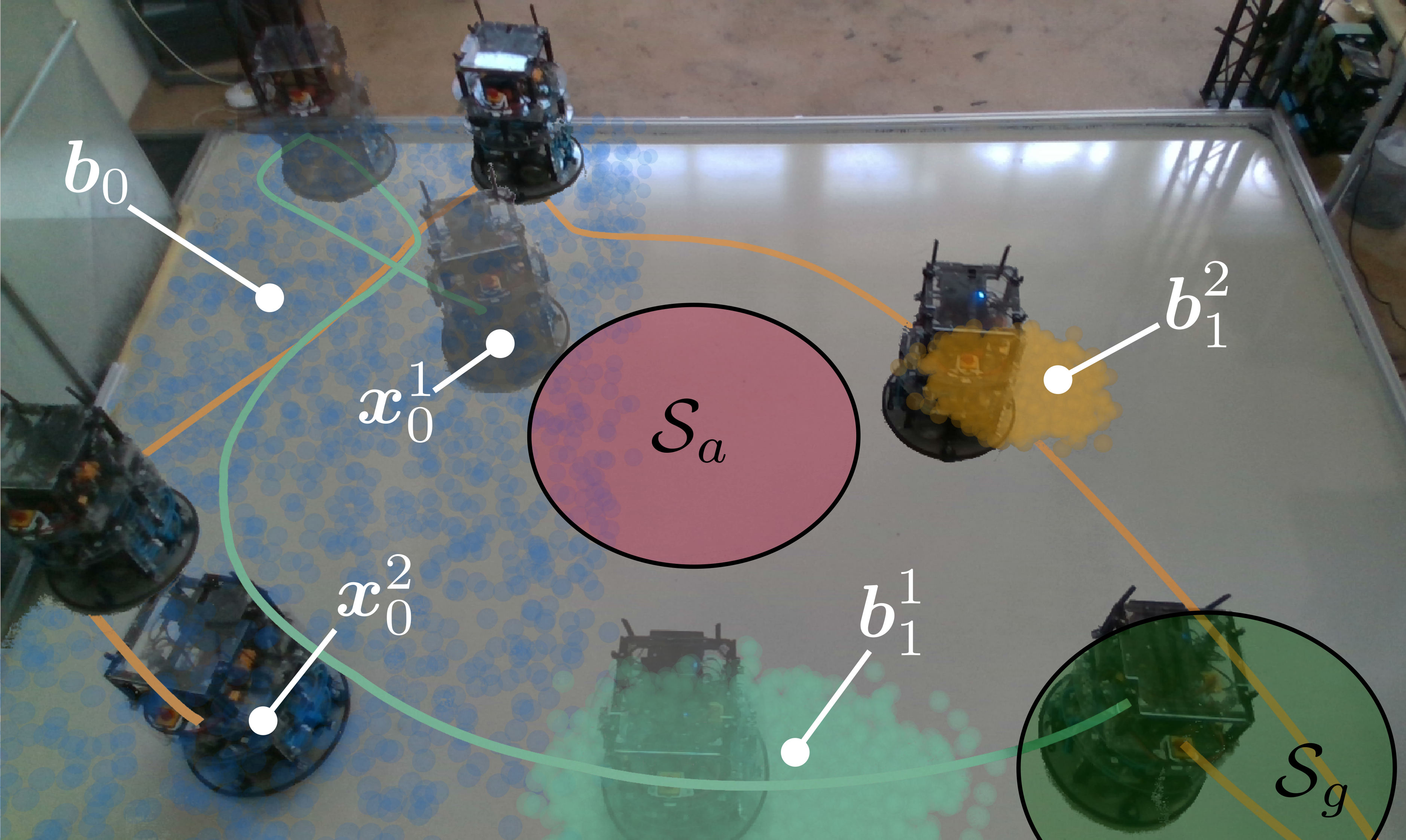}
    \vspace{-0.6cm}
    \caption{Illustration of our hardware experiments on a space-robotics platform. The robot starts from an unknown initial position that is located somewhere in the initial belief $\bm{b}_0$ and has to navigate to the goal region $\mathcal{S}_g$ without entering the avoid region $\mathcal{S}_a$. The robot can localize itself in a map by detecting impacts with the walls. Two runs with different initial conditions are shown.}
    \vspace{-0.3cm}
    \label{fig:FirstPage}
\end{figure}

% Paragraph 3: The problem — uncertainty prevents us from achieving either objective.
% IG emerges as an enabling behavior, turning two objectives into three behaviors.
However, under partial observability, neither objective can be achieved reliably when the robot's uncertainty about its own state is too large: it can neither certify that it has reached the goal nor guarantee that it will remain safe. To resolve this, the robot must actively gather information to reduce its uncertainty to a level at which both objectives can be ensured. Information gathering is therefore not a third objective that we impose, but one that the problem itself demands, turning the two objectives of the reach-avoid problem into three distinct control behaviors that must be coordinated during execution.

% Paragraph 4: Existing solvers merge everything into one search — timescale conflict.
Existing state-of-the-art online POMDP solvers attempt to address all three behaviors within a single belief tree search, encoding performance, safety, and information gathering into one optimization problem~\cite{zhitnikov2023simplified, zhitnikov2024anytime, moss2024constrainedzero}. While this unified formulation is very general, it introduces fundamental practical challenges. The different behaviors required in reach-avoid POMDPs operate on conflicting time scales. Satisfying safety constraints demands high-frequency, reactive control to prevent constraint violations in continuous time, whereas performance objectives such as goal reaching and information gathering benefit from longer planning horizons and coarser temporal abstraction. Encoding these competing requirements into a single online planning process leads to scalability issues and makes it difficult to deploy such solvers reliably in safety-critical robotic systems. If the planning time step is chosen to be small, the robot can react quickly and remain safe but lacks foresight, leading to overly conservative or short-sighted behavior. Conversely, larger planning time steps enable long-horizon information gathering and goal-reaching behavior, but at the cost of reduced reactivity and weaker safety assurances.

% Paragraph 5: Layered architectures as an alternative — motivate the central question.
In contrast, classic robotics pipelines have long relied on layered control architectures to manage complex tasks~\cite{tzafestas2018mobile}. By decomposing an overall problem into smaller subproblems, such as high-level planning, motion planning, and low-level control, these architectures allow each component to operate at an appropriate frequency while maintaining modularity and robustness. A similar decomposition in belief space would allow safety to be enforced at high frequency while information gathering and goal reaching operate on longer time scales, directly addressing existing shortcomings. However, designing such architecture is challenging because the control-theoretic tools that enable layered designs in state space do not readily extend to belief spaces. This motivates the central question of this paper: \emph{can we design a layered control architecture directly in belief space for solving reach-avoid POMDPs, and what are the missing components to do so?}

% Paragraph 6: "I could use CLFs/CBFs but..." — identify what's missing in each layer.
Control theory offers principled tools for realizing the individual layers of such an architecture through control certificates.  Control Lyapunov Functions (CLFs)~\cite{artstein1983stabilization} and control barrier functions (CBFs)~\cite{ogren2006autonomous, ames2016control} provide formal guarantees for stability and safety. Recent work has begun to extend CBF-based safety filters to belief spaces, enabling risk-aware safety guarantees under partial observability~\cite{vahs2023belief, vahs2024risk}. However, there is no straightforward way to design certificate-based controllers for information gathering, which is essential for reducing uncertainty and enabling reliable goal reaching. In this work, we propose a control architecture that explicitly decouples goal reaching, safety, and information gathering, allowing robots to reach their goals safely while providing probabilistic guarantees associated with each certificate.

% Paragraph 7: What we do — we don't just assemble layers, we advance each one.
This modular design has several advantages. An information-gathering controller in belief space can be reused across different reach-avoid objectives without redesign. Goal-reaching controllers can be designed using standard state-based techniques, and existing belief-space CBF methods can be leveraged for safety enforcement. At the same time, our approach overcomes a key limitation of prior belief-space safety filters by moving beyond pointwise-in-time guarantees and enabling probabilistic safety guarantees over finite horizons. Together, these ideas lead to a scalable and flexible framework for solving reach-avoid POMDPs and deploying them on hardware.

\subsection{Contributions}
We propose Belief Control Lyapunov and Barrier Functions (BCLF and BCBF, respectively) in a layered control architecture for solving reach-avoid POMDPs through belief-space control. Specifically, we make the following contributions:
\begin{enumerate}
    \item We formalize information gathering as the problem of finding a valid CLF in a non-Gaussian belief space, framing it as an enabling behavior for both goal reaching and safety;
    \item We present a method for learning belief CLFs through reinforcement learning and establish theoretical conditions under which RL value functions constitute valid stochastic and finite-time CLFs;
    \item We develop a risk-aware safety filter as a belief CBF that provides probabilistic safety guarantees over a finite horizon, moving beyond pointwise-in-time assurances;
    \item We show that the resulting modular architecture, combining state-based reference controllers, BCLFs, and BCBFs, improves both safety and task performance over state-of-the-art constrained POMDP solvers;
    \item We validate our approach on a space robotics hardware platform in multiple challenging scenarios with non-Gaussian beliefs, synthesizing control inputs in real time for belief states with dimension $>10^4$.
\end{enumerate}
This work expands on initial results on Belief Control Barrier Functions~\cite{vahs2023belief, vahs2024risk}, which designed risk-aware safety filters in Gaussian and non-Gaussian belief spaces. We provide both practical and theoretical advancements of belief space safety filters suitable for reach-avoid POMDPs. Furthermore, the design of Belief Control Lyapunov Functions for information gathering has not been considered before.

 \section{Related Work} 

\subsection{POMDP Solvers}
Offline POMDP solvers compute a policy prior to execution by approximating the optimal value function over the belief space. Classical point-based methods \cite{porta2006point, spaan2005perseus, smith2012heuristic, smith2012point, kurniawati2008sarsop} are effective for small, discrete problems but require significant offline computation and memory. They do not scale well to large or continuous state and observation spaces, limiting their applicability in robotics~\cite{kurniawati2022partially}.

Online POMDP solvers address these limitations by interleaving planning and execution. Rather than computing a global policy, they build a belief tree from the current belief using online tree search and Monte Carlo sampling, selecting actions that maximize expected reward over a finite, receding horizon. Recent online solvers leverage particle-based beliefs and generative models to handle continuous state and observation spaces \cite{garg2019despot, hoerger2021line, lim2021sparse, mern2021bayesian, sunberg2018online, barenboim2023online}.

Despite their improved scalability, standard online POMDP solvers are primarily designed for maximizing expected reward. This formulation is not straightforward to apply in safety-critical settings, where violations may be rare but catastrophic. To address this, constrained POMDP (cPOMDP) formulations have been proposed, in which reward maximization is subject to safety constraints. Some works enforce constraints in expectation \cite{lee2018monte, jamgochian2023online, stocco2024addressing}, while others consider probabilistic or chance-constrained formulations that explicitly bound the probability of constraint violation \cite{zhitnikov2023simplified, zhitnikov2024anytime, moss2024constrainedzero}.

A fundamental challenge shared by these approaches is that online POMDP solvers attempt to encode both performance objectives and safety requirements within a single tree search. These objectives can be conflicting: safety often requires high-frequency control, whereas performance benefits from longer planning horizons and coarser temporal abstraction. To mitigate this issue, recent work has explored hierarchical POMDP solvers, where higher-level planning is performed over abstract actions or skills \cite{jamgochian2024constrained}. 
%However, existing hierarchical approaches typically do not execute controllers at different time scales, and the construction or learning of the skill hierarchy is often assumed rather than explicitly addressed.

\subsection{Belief Space Control}
Although POMDP solvers operate on beliefs, they primarily rely on belief tree search. In contrast, belief space planning and control strategies explicitly model belief dynamics and synthesize plans or policies directly in the belief space. This line of work originates in belief space planning (BSP), which formulates trajectory optimization problems over belief states.

Early BSP methods adopt Gaussian belief representations, propagating uncertainty through linearized dynamics and Gaussian filters. Foundational work formulates trajectory optimization directly in this belief space, allowing robots to balance task performance with information gathering and, in some cases, incorporate probabilistic safety constraints \cite{Platt2010BeliefSP, van2012motion, vitus2011closed, vahs2023risk}. Although computationally efficient, these approaches are constrained by relying on Gaussian assumptions.

To address more complex beliefs, non-Gaussian BSP methods have been proposed. In particular, particle-based belief representations are used to capture multi-modal uncertainty, and planning is formulated as a reachability or trajectory optimization problem in belief space \cite{platt2012non, platt2016efficient}. These methods seek information-gathering actions by ruling out competing hypotheses, but still primarily produce offline belief space plans rather than feedback policies.

A key limitation of BSP is that it generates a belief space plan that is not explicitly adapted online as new observations arrive. This has motivated online belief space control strategies, which compute control inputs directly as a function of the current belief. A subset of this literature focuses on controlling an external belief, such as a map, a field, or moving targets in the environment using information-theoretic objectives to reduce uncertainty \cite{atanasov2014information, cai2021non, yang2022learning, acp2024}. Related work learns policies directly from Gaussian belief states for tasks such as active target tracking \cite{yang2022learning}.

Bridging POMDPs and control theory, \cite{ahmadi2020control} propose a formalism that treats POMDPs as hybrid systems and defines Lyapunov- and barrier-like certificates in belief space to characterize reachable belief sets and verify safety properties. While this work establishes an important theoretical connection, it focuses on verification rather than control synthesis. In contrast, our work aims to directly synthesize control laws and certificates in belief space.

In this direction, belief space safety filters based on control barrier functions (CBFs) have been proposed. For Gaussian belief states, belief CBFs enforce probabilistic safety constraints pointwise-in-time using EKF-based belief dynamics \cite{vahs2023belief}, with extensions to state-dependent noise in~\cite{walia2025belief}. These ideas have been further generalized to non-Gaussian belief spaces using particle filters \cite{vahs2024risk, han2025risk}. A major advantage of these methods is scalability, as control synthesis reduces to solving a single quadratic program. This can be done efficiently in real-time even for belief representations with thousands of particles~\cite{vahs2024risk}. However, existing work focuses on risk-aware safety guarantees and does not address the synthesis of other belief space certificates for performance objectives.

Finally, early information-theoretic control approaches define Lyapunov-like objectives over belief uncertainty, such as mutual information, to guide exploration in POMDPs \cite{hoffmann2009mobile, hoffmann2006mutual}. These methods optimize only one-step uncertainty reduction, which can result in greedy behavior and convergence to local minima.

\subsection{Learning Control Certificates}
Since our architecture relies on constructing CLFs and CBFs in belief space, we review methods for learning such certificates. CLFs characterize stability by certifying the existence of control inputs that drive the system toward a desired equilibrium \cite{artstein1983stabilization}, while CBFs encode safety as forward invariance of a safe set \cite{ogren2006autonomous, ames2016control}. Despite their strong theoretical foundations, constructing suitable certificates for general nonlinear and stochastic systems remains a largely open problem.

Data-driven approaches have emerged that learn continuous-time certificates using neural networks, typically by uniformly sampling the state space and enforcing Lyapunov or barrier conditions at sampled points \cite{dawson2022learning, dawson2023safe, dawson2022safe}. Similar ideas have been applied to learning barrier functions from observations \cite{long2021learning, xiao2023barriernet}. Extensions to stochastic systems include distributionally robust Lyapunov functions \cite{long2023distributionally, long2024distributionally}, stochastic CLFs \cite{zhang2022neural, neustroev2025neural}, and stochastic CBFs \cite{tayal2024learning, zhang2025stochastic}. While these methods alleviate the design of control certificates, they scale poorly to high dimensions due to their reliance on dense uniform sampling over the state space.

A complementary line of work draws connections between control certificates and value functions. In continuous time, \cite{so2024train} establish a relationship between CBFs and value functions, while in discrete time, \cite{tan2024safe} show that an RL value function can directly serve as a CBF under a suitable reward structure. Earlier work \cite{berkenkamp2017safe} argues that value functions can act as CLFs for deterministic systems, though precise conditions are not fully specified. More recently, several works have linked RL with Hamilton–Jacobi reachability to learn safety value functions \cite{nakamura2025generalizing, agrawal2025anysafe} and latent-space CBFs \cite{nakamura2025trainlatentcontrolbarrier}. Finally, \cite{du2023reinforcement} propose a Lyapunov–barrier actor–critic method that couples policy learning with certificate satisfaction.

Overall, existing learning-based certificate methods perform well in low-dimensional deterministic settings but face challenges in high-dimensional belief spaces with stochastic dynamics. This motivates our approach of learning certificates via reinforcement learning in belief space, for which we provide conditions under which a value function exhibits the same theoretical properties as stochastic CLFs.

\section{Preliminaries}
\subsection{Stochastic Control Barrier Functions}
\label{sec:SCBFs}
Consider the continuous-time stochastic dynamical system in control affine form
\begin{align}
    \mathrm{d}\bm{x} &= \left(\bm{f}\left(\bm{x}\right) + \bm{g}\left(\bm{x}\right)\bm{u}\right) \mathrm{d}t + \bm{\sigma}\ofx ~\mathrm{d}\bm{W}\label{eq:SDE}
\end{align}
where $\bm{x} \in \mathcal{X} \subseteq \mathbb{R}^{n_x}$ is the state, $\bm{u} \in \mathcal{U} \subseteq \mathbb{R}^m$ is the control input, $\bm{W} \in \mathbb{R}^q$ is $q\text{-dimensional}$ Brownian motion and $\bm{\sigma}\ofx$ is a diffusion term. Next, we present the necessary theory to ensure set invariance under stochastic dynamics. Let
\begin{equation}
\begin{aligned}
  \label{eq:safeset_state}
    \mathcal{C}_x &= \left\{\bm{x} \in \mathcal{X} \mid h_x\left(\bm{x}\right) \geq 0\right\}
 \end{aligned}
\end{equation} 
be a safe set in which we want the state to remain in.  
Stochastic CBFs have been proposed in \cite{clark2021control, so2023almost} which render the safe set $\mathcal{C}_x$ forward invariant.
% assuming that the function $h_x\ofx:\mathcal{X}\mapsto \mathbb{R}$ is twice continuously differentiable. 
\begin{definition}
\label{def:RCBF}
        Given a safe set $\mathcal{C}_x$ defined by \eqref{eq:safeset_state}, the function $B_x\ofx$ is a stochastic reciprocal control barrier function (RCBF) if 
        \begin{enumerate}
            \item there exist class-$\kappa$ functions\footnote{A function $\alpha:\mathbb{R}\mapsto\mathbb{R}$ is class-$\kappa$ if it is strictly increasing and $\alpha(0)=0$} $\alpha_1$ and $\alpha_2$ such that 
            \begin{align}
                \frac{1}{\alpha_1\left(h_x\left(\bm{x}\right)\right)} \leq B_x\ofx \leq \frac{1}{\alpha_2\left(h_x\left(\bm{x}\right)\right)},
            \end{align}
            \item there exists a class-$\kappa$ function $\alpha_3$ and a $\bm{u} \in \mathcal{U}$ such that
            \begin{align*}
            \frac{\partial B_x}{\partial \bm{x}} \left(\bm{f}\ofx + \bm{g} \ofx \bm{u}\right) + \frac{1}{2} \mathrm{tr}\left[\bm{\sigma}^T \frac{\partial^2 B_x}{\partial \bm{x}^2} \bm{\sigma}\right]\\
            \leq \alpha_3(h_x\ofx).
            \end{align*}
        \end{enumerate}
    \end{definition}
    \begin{theorem}[Thm. 2 in \cite{clark2021control}]
    \label{thm:SCBF}
    If $B_x$ is a RCBF as in Def.~\ref{def:RCBF} for the continuous-time system~\eqref{eq:SDE} and at each time $t$, $\ub(t)$ satisfies Def.~\ref{def:RCBF}, then $\mathrm{Pr}\left[\bm{x}(t) \in \mathcal{C}_x, \hspace{0.2cm} \forall t\geq t_0\right]=1$, provided that $\xb(t_0) \in \mathcal{C}_x$.
    \end{theorem}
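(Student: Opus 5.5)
The plan is a stopping-time argument for the process $B_x(\bm{x}(t))$. The reciprocal barrier $B_x$ blows up at the boundary of $\mathcal{C}_x$, so leaving the safe set would require $B_x$ to reach $+\infty$. The aim is to show this happens with probability zero.

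First, fix $\bm{x}(t_0)$ in the interior of $\mathcal{C}_x$. For $\epsilon>0$ small, define the stopping time $\tau_\epsilon = \inf\{t \geq t_0 : h_x(\bm{x}(t)) \leq \epsilon\}$, and let $\tau = \lim_{\epsilon\to 0}\tau_\epsilon$ be the first exit time from the interior. Before $\tau_\epsilon$, $B_x$ is twice continuously differentiable with bounded derivatives along the path. Applying It\^o's lemma to $B_x(\bm{x}(t\wedge\tau_\epsilon))$ and using condition~2 of Def.~\ref{def:RCBF} gives
\begin{align*}
B_x(\bm{x}(t\wedge\tau_\epsilon)) \leq B_x(\bm{x}(t_0)) + \int_{t_0}^{t\wedge\tau_\epsilon} \alpha_3(h_x(\bm{x}(s)))\,\mathrm{d}s + M_{t\wedge\tau_\epsilon},
\end{align*}
where $M$ is the stopped It\^o integral. $M$ is a true martingale because the stopped integrand is bounded.

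Second, bound the drift term. Condition~1 of Def.~\ref{def:RCBF} relates $h_x$ to $B_x$: $1/B_x \leq \alpha_2(h_x)$, hence $h_x \geq \alpha_2^{-1}(1/B_x)$ from one side and $h_x \leq \alpha_1^{-1}(1/B_x)$ from the other. This lets me write the drift bound as $\alpha_3(h_x) \leq \alpha_3(\alpha_1^{-1}(1/B_x))$. The right-hand side is bounded, and it decays as $B_x$ grows. Taking expectations then gives $\mathbb{E}[B_x(\bm{x}(t\wedge\tau_\epsilon))] \leq B_x(\bm{x}(t_0)) + c\,(t-t_0)$ for a finite constant $c$.

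Third, conclude. On the event $\{\tau_\epsilon \leq t\}$ we have $B_x \geq 1/\alpha_1(\epsilon)$ by condition~1. Markov's inequality then yields
\begin{align*}
\mathrm{Pr}[\tau_\epsilon \leq t] \leq \alpha_1(\epsilon)\bigl(B_x(\bm{x}(t_0)) + c\,(t-t_0)\bigr).
\end{align*}
Letting $\epsilon\to 0$ gives $\mathrm{Pr}[\tau\leq t]=0$ for every finite $t$. A countable union over $t\in\mathbb{N}$ then gives $\mathrm{Pr}[\tau<\infty]=0$, which is the claimed almost-sure invariance.

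I expect two obstacles. The main one is the bound on the drift in the second step. This is where the class-$\kappa$ structure, and possibly a more careful comparison argument as in \cite{clark2021control}, is needed so that $\alpha_3$ does not allow $B_x$ to grow too fast as $h_x$ becomes large. The second obstacle is justifying the well-posedness of the closed-loop SDE under the chosen input $\bm{u}(t)$, for which I would assume measurability and local Lipschitz regularity. The case where $\bm{x}(t_0)$ lies on the boundary, i.e. $h_x=0$, is excluded implicitly, since $B_x$ must be finite at $t_0$.
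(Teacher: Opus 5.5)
The paper gives no proof of this statement; it imports it from \cite{clark2021control}, so your argument has to stand on its own. Your overall structure is the right one: It\^o's formula on a stopped process, Markov's inequality on $B_x$ at the hitting time of $\{h_x=\epsilon\}$, then $\epsilon\to0$ and a countable union over $t$. However, Step~2 has a genuine gap.

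First, you inverted condition~1. From $1/\alpha_1(h_x)\le B_x\le 1/\alpha_2(h_x)$ one gets $\alpha_2(h_x)\le 1/B_x\le\alpha_1(h_x)$, i.e.\ $\alpha_1^{-1}(1/B_x)\le h_x\le\alpha_2^{-1}(1/B_x)$. The valid bound is therefore $\alpha_3(h_x)\le\alpha_3(\alpha_2^{-1}(1/B_x))$. Your $\alpha_3(\alpha_1^{-1}(1/B_x))$ is actually a \emph{lower} bound on $\alpha_3(h_x)$.

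Second, and more seriously, the drift bound is not bounded. $\alpha_3(h_x)$ is unbounded whenever $h_x$ is unbounded on $\mathcal{C}_x$, for example the paper's own $h_x(x)=10-x$ with a linear $\alpha_3$. This is the regime of small $B_x$, not large. For the same reason $\{h_x\ge\epsilon\}$ need not be compact. So $\frac{\partial B_x}{\partial\bm{x}}\bm{\sigma}$ need not be bounded before $\tau_\epsilon$, and $M$ need not be a true martingale. Hence the finite $c$ in $\mathbb{E}[B_x(\bm{x}(t\wedge\tau_\epsilon))]\le B_x(\bm{x}(t_0))+c\,(t-t_0)$ is unjustified, and Step~3 rests on it.

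The missing ingredient is not a finer class-$\kappa$ or comparison argument but a second, spatial localization. Let $\sigma_R=\inf\{t\ge t_0:\lVert\bm{x}(t)\rVert\ge R\}$ and stop at $\tau_\epsilon\wedge\sigma_R$. On the compact set $\{h_x\ge\epsilon,\ \lVert\bm{x}\rVert\le R\}$ the It\^o integrand is bounded, so the stopped stochastic integral has zero mean. Also $\alpha_3(h_x)\le\alpha_3(H_R)$ with $H_R=\max_{\lVert\bm{x}\rVert\le R}h_x<\infty$. Markov's inequality then gives
\[
\mathrm{Pr}[\tau\le t\wedge\sigma_R]\le\mathrm{Pr}[\tau_\epsilon\le t\wedge\sigma_R]\le\alpha_1(\epsilon)\bigl(B_x(\bm{x}(t_0))+\alpha_3(H_R)\,(t-t_0)\bigr).
\]
Let $\epsilon\to0$ with $R$ fixed. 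Then use $\mathrm{Pr}[\tau\le t]\le\mathrm{Pr}[\tau\le t\wedge\sigma_R]+\mathrm{Pr}[\sigma_R\le t]$, together with non-explosion of the closed-loop SDE, to send $R\to\infty$.

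This never uses the decay of $\alpha_3$ near $h_x=0$, only that the drift bound is finite on compact sets. What matters is taking $\epsilon\to0$ before $R\to\infty$, not controlling growth at large $h_x$. You also need $\alpha_1$ to be continuous at $0$ so that $\alpha_1(\epsilon)\to0$; the paper's footnote omits continuity from the class-$\kappa$ definition. You are right that $h_x(\bm{x}(t_0))>0$ is required, so the statement should be read for interior initial conditions.
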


\subsection{Stochastic Lyapunov Theory}
Consider the discrete-time stochastic dynamical system
\begin{align}
    \xb_{k+1} = \bm{F}(\xb_k, \ub_k, \bm{w}_k)\label{eq:discrete_time_sys}
\end{align}
where $\bm{w} \in \mathbb{R}_q$ is a random variable distributed according to $\bm{w}_k\sim p(\bm{w})$. Subsequently, we introduce the necessary theory to guarantee convergence to a goal set $\mathcal{G}\subset\mathcal{X}$ in state space.
\begin{definition}
\label{def:SCLF}
    A function $W :\mathcal{X}\mapsto\mathbb{R}$ is a stochastic Control Lyapunov Function (SCLF) for the dicrete-time system~\eqref{eq:discrete_time_sys} and the goal set $\mathcal{G} \subset\mathcal{X}$ if it satisfies
    \begin{align*}
        &W(\xb) \leq 0 \hspace{0.5cm} \forall \xb \in \mathcal{G}\\
        &W(\xb) > 0 \hspace{0.5cm} \forall \xb \in \mathcal{X}\setminus \mathcal{G}\\
        &\exists\ub \in \mathcal{U}:~\mathbb{E}\left\{W(\bm{F}(\xb_{k}, \ub_k, \bm{w}_k))\right\} \leq c W(\bm{x}_k)\hspace{0.5cm} \forall \xb \notin \mathcal{G},
    \end{align*}
    for some $c \in (0, 1)$.
\end{definition}
\begin{theorem}[Thm. 2.3 in \cite{haddad2022lyapunov}]
\label{thm:SCLF}
    If $W\ofx$ is a SCLF as in Def.~\ref{def:SCLF} and the control input satisfies Def. \ref{def:SCLF}, then
    \begin{align}
        \underset{k \rightarrow \infty}{\mathrm{lim}} W(\xb_k) = 0
    \end{align}
    almost surely which implies that the state asymptotically converges to the goal set almost surely.
\end{theorem}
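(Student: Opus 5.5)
The plan is a supermartingale argument applied to the nonnegative part of $W$ along trajectories. Let $\mathcal{F}_k$ be the $\sigma$-algebra generated by $\xb_0,\dots,\xb_k$ (and the noise up to $k-1$), and let $\tau = \inf\{k : \xb_k \in \mathcal{G}\}$ be the first hitting time of the goal set. I would work with the stopped process $V_k := W(\xb_{k\wedge\tau})$. For $k<\tau$ we have $\xb_k\notin\mathcal{G}$, so the chosen input satisfies the third condition of Def.~\ref{def:SCLF}. Since $\bm{w}_k$ is independent of $\mathcal{F}_k$, this yields
\begin{align*}
\mathbb{E}\{V_{k+1}\mid\mathcal{F}_k\}\le c\,V_k \quad \text{on } \{k<\tau\}.
\end{align*}
On $\{k\ge\tau\}$ the process is frozen, so $V_{k+1}=V_k$. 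Hence $\mathbb{E}\{V_{k+1}\mid\mathcal{F}_k\}\le V_k$ always, with strict geometric contraction before hitting $\mathcal{G}$.

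Next I would take expectations on the event $\{\tau>k\}$. There $V_j>0$ for all $j\le k$, and iterating the contraction gives $\mathbb{E}\{W(\xb_k)\mathbf{1}_{\tau>k}\}\le c^k W(\xb_0)$. To make this useful, I would use the positivity of $W$ outside $\mathcal{G}$.

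\emph{Option (a).} If $W$ is bounded away from zero outside $\mathcal{G}$, say $W\ge\epsilon>0$ there, then Markov's inequality gives $\Pr[\tau>k]\le c^kW(\xb_0)/\epsilon$. This is summable, so by Borel--Cantelli $\tau<\infty$ almost surely, and $W(\xb_\tau)\le 0$ at the hitting time.

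\emph{Option (b).} Without a uniform margin, I would define $Z_k := c^{-(k\wedge\tau)}V_k$. This is a nonnegative supermartingale on $\{\tau>k\}$, so it converges almost surely to a finite limit. Therefore $V_k\le c^{k}Z_k\to 0$ on $\{\tau=\infty\}$, i.e.\ $W(\xb_k)\to0$ almost surely. On $\{\tau<\infty\}$ the trajectory has entered $\mathcal{G}$.

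Either route gives $\lim_k W(\xb_k)=0$ (for the stopped process), and in particular convergence to $\mathcal{G}$ almost surely. This is the claim of Theorem~\ref{thm:SCLF}, which is cited from \cite{haddad2022lyapunov}, so the argument amounts to reconstructing that result.

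The main obstacle is the behaviour of the trajectory after it reaches $\mathcal{G}$. Definition~\ref{def:SCLF} imposes no decrease condition inside $\mathcal{G}$, and $W$ may be negative there. So without stopping the process, $W(\xb_k)$ need not be a supermartingale, and ``$\lim W=0$'' is not even meaningful for the unstopped trajectory. I would first try to interpret the theorem for the process stopped at $\tau$, which is the natural reach-avoid reading. I would then handle the lack of a uniform margin in the last step via the rescaled supermartingale $Z_k$ and the martingale convergence theorem, rather than Markov's inequality. A secondary technical point is measurability of the selected input $\ub_k$ as a function of $\xb_k$, which I would assume.
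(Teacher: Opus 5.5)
The paper gives no proof of this statement; it only imports it from the cited reference. So your argument has to stand on its own, and it fails at the step ``iterating the contraction gives $\mathbb{E}\{W(\xb_k)\mathbf{1}_{\tau>k}\}\le c^kW(\xb_0)$.''

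The decrease condition of Def.~\ref{def:SCLF} controls only the full conditional mean $\mathbb{E}\{W(\xb_{k+1})\mid\mathcal{F}_k\}$. That mean includes successors landing in $\mathcal{G}$, where $W\le 0$ has no lower bound. Since $W(\xb_{k+1})\mathbf{1}_{\xb_{k+1}\notin\mathcal{G}}\ge W(\xb_{k+1})$, passing from $\{\tau>k\}$ to $\{\tau>k+1\}$ pushes the inequality the wrong way. Very negative values inside $\mathcal{G}$ can offset positive values outside $\mathcal{G}$ that never decrease.

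Option (b) fails for the same reason. $Z_k$ is a supermartingale, but after $\tau$ it equals $c^{-\tau}W(\xb_\tau)\le 0$. So it is not nonnegative, $\sup_k\mathbb{E}\{Z_k^-\}$ need not be finite, and the supermartingale convergence theorem does not apply.

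No argument can fix this under Def.~\ref{def:SCLF} as written, because the statement is then false. Take:
\begin{itemize}
\item absorbing goal states $g_n$ with $W(g_n)=-2^{n+2}$;
\item non-goal states $s_n$ with $W(s_n)=1$;
\item transitions $s_n\mapsto g_n$ with probability $p_n=2^{-(n+2)}$, and $s_n\mapsto s_{n+1}$ otherwise.
\end{itemize}
Then $\mathbb{E}\{W(\xb_{k+1})\mid\xb_k=s_n\}=-p_n<c\,W(s_n)$. Also $W\ge 1$ off $\mathcal{G}$, so even your option (a) margin holds. Yet starting from $s_0$, $\Pr[\tau=\infty]\ge 1-\sum_n p_n=\tfrac12$, and on that event $W(\xb_k)\equiv 1$.

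Separately, on $\{\tau<\infty\}$ your stopped process converges to $W(\xb_\tau)$, which is $\le 0$ but need not equal $0$. So your closing claim ``$\lim_k W(\xb_k)=0$ for the stopped process'' does not follow either.

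The missing ingredient is a lower bound on $W$ over $\mathcal{G}$, which you must add as a hypothesis.
\begin{itemize}
\item If $W=0$ on $\mathcal{G}$, then $W\ge 0$ everywhere. This is exactly the paper's use case in Theorem~\ref{thm:BCLF_value}, where $W=-V^*$ vanishes on the absorbing zero-reward goal set. Then $W(\xb_{k+1})\mathbf{1}_{\xb_{k+1}\notin\mathcal{G}}=W(\xb_{k+1})$, your bound holds, $Z_k\ge 0$, and both options go through. In fact $\mathbb{E}\{\sum_k V_k\}\le W(\xb_0)/(1-c)$ already gives $V_k\to 0$ almost surely.
\item If you only know $W\ge -L$ on $\mathcal{G}$, then $V_k$ is a supermartingale bounded below. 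Telescoping $\mathbb{E}\{V_{k+1}-V_k\mid\mathcal{F}_k\}\le -(1-c)V_k\mathbf{1}_{\tau>k}$ gives $(1-c)\,\mathbb{E}\{\sum_{k<\tau}W(\xb_k)\}\le W(\xb_0)+L$. Hence $W(\xb_k)\to 0$ on $\{\tau=\infty\}$, and $\mathbb{E}\{\tau\}<\infty$ under your margin $W\ge\epsilon$. The limit on $\{\tau<\infty\}$ is still $W(\xb_\tau)$, not necessarily $0$.
\end{itemize}
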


\begin{definition}
\label{def:FSCLF}
    A function $W :\mathcal{X}\mapsto\mathbb{R}$ is a finite-time stochastic Control Lyapunov Function (FSCLF) for the discrete-time system~\eqref{eq:discrete_time_sys} and the goal set $\mathcal{G} \subset\mathcal{X}$ if it satisfies
    \begin{align}
        &W(\xb) \leq 0 \hspace{0.5cm} \forall \xb \in \mathcal{G}\\
        &W(\xb) > 0 \hspace{0.5cm} \forall \xb \in \mathcal{X}\setminus \mathcal{G}\\
        &\exists\ub \in \mathcal{U}:~\Delta W \leq -\min(W\ofx, \eta)~ \forall \xb \in \mathcal{X}\notin \mathcal{G},\label{eq:SCLF2} 
    \end{align}
    with $\Delta W = \mathbb{E}\left\{W(\bm{F}(\xb_{k}, \ub_k, \bm{w}_k))\right\} - W\ofx$ and for some $\eta > 0$.
\end{definition}
\begin{theorem}[Thm. 4.2 in \cite{lee2022finite}]
\label{thm:FSCLF}
    If $W\ofx$ is a FSCLF as in Def.~\ref{def:FSCLF} and the control input satisfies Def. \ref{def:FSCLF}
    then there exists an almost surely finite stochastic settling time $K(\xb_0)$ such that the state reaches $\mathcal{G}$ in finite time, i.e.
    \begin{align}
        \mathrm{Pr}\left[W(\xb_{K(\xb_0)})=0\right]=1
    \end{align}
    with $\mathbb{E}\{K(\xb_0)\} \leq \lceil \nicefrac{W(\xb_0)}{\eta}\rceil$.
\end{theorem}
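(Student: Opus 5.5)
The natural route to Theorem~\ref{thm:FSCLF} is a supermartingale argument on the stopped process, followed by a Dynkin-type (optional stopping) bound on the hitting time. Let $K$ be the first time the state enters $\mathcal{G}$, and consider the stopped process $W_k := W(\xb_{\min(k,K)})$. Up to time $K$, the state lies outside $\mathcal{G}$, so $W>0$ there. Applying the chosen control and conditioning on $\xb_k$, condition~\eqref{eq:SCLF2} gives
\[
\mathbb{E}\{W_{k+1}\mid \xb_k\} \leq W_k - \min(W_k,\eta) \quad \text{on } \{k<K\}.
\]
For $k\ge K$ the process is frozen, so $(W_k)$ is a supermartingale as long as it stays nonnegative.

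The first step is to establish nonnegativity, which also locates the terminal value. Before $K$ we have $W>0$. One step of the decrease condition drives the expected value down, possibly by the full amount $W$, so the only time $W$ can be nonpositive is on entry to $\mathcal{G}$. Since the theorem asserts $W(\xb_K)=0$, I would use the convention that $W\ge 0$ on the relevant region, with $W=0$ exactly on $\mathcal{G}$. If $W$ could go negative inside $\mathcal{G}$, I would instead work with $\max(W,0)$ and check that the decrease inequality survives, by convexity of $\max(\cdot,0)$ with respect to the expected value.

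The second step is the settling-time bound. Split the evolution into the regime $W_k\ge\eta$, where each step decreases the expectation by at least $\eta$, and the regime $W_k<\eta$, where the decrease is at least $W_k$. In the second regime the expected next value is at most $0$; combined with nonnegativity, this forces the next value to be $0$ almost surely, i.e.\ the state hits $\mathcal{G}$ in one more step.

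To get the expectation, apply Dynkin's formula to the process $W_{\min(k,K)}+\eta\,\min(k,K)$. Strictly, I would use $W_k+\eta k$ while $W\ge\eta$, and account separately for the final step. This gives
\[
\eta\,\mathbb{E}\{\min(k,K)\}\le W(\xb_0)+\eta
\]
up to the final-step bookkeeping. Monotone convergence as $k\to\infty$ then yields $\mathbb{E}\{K\}<\infty$, hence $K<\infty$ almost surely, with the bound $\lceil W(\xb_0)/\eta\rceil$.

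To recover the ceiling exactly rather than a bound of the form $W_0/\eta+1$, I would prove $\mathbb{E}\{K\}\le\lceil W_0/\eta\rceil$ by induction on $n=\lceil W_0/\eta\rceil$. When $n=1$ we have $W_0\le\eta$, so the next value is $0$ almost surely. For the inductive step, bound the conditional expected remaining time by a concave-type majorant $\phi(w)$, for example piecewise linear interpolation of $\lceil w/\eta\rceil$, so that Jensen's inequality applies to the expected next value.

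I expect this exact ceiling bound to be the main obstacle, because $\lceil\cdot\rceil$ is not concave. I would first try the majorant $\phi(w)=w/\eta+1$ for $w>0$ with $\phi(0)=0$. Failing that, I would settle for $\mathbb{E}\{K\}\le W_0/\eta+1$ and argue that it matches the cited result in \cite{lee2022finite}, as the authors presumably intend.
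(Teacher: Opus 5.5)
The paper gives no proof here; the result is imported from \cite{lee2022finite}. Your core argument is correct. Take $\phi(w)=w/\eta+1$ for $w>0$ and $\phi(0)=0$. Then the stopped process $\phi(W(\xb_{k\wedge K}))+k\wedge K$ is a nonnegative supermartingale: for $W(\xb_k)\ge\eta$ use $\phi(w)\le w/\eta+1$, and for $0<W(\xb_k)<\eta$ the next value is $0$ almost surely. This yields $K<\infty$ almost surely, $W(\xb_K)=0$, and $\mathbb{E}\{K\}\le W(\xb_0)/\eta+1$.

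The gap is the final step, and it cannot be closed, because the bound $\mathbb{E}\{K\}\le\lceil W(\xb_0)/\eta\rceil$ is false under Def.~\ref{def:FSCLF}. Take $\eta=1$, $W(x)=x$, $\mathcal{G}=\{0\}$ (absorbing), and states $\{0,\tfrac12,\tfrac32,2\}$. From $2$ move to $\tfrac32$ with probability $\tfrac23$ and to $0$ otherwise; from $\tfrac32$ move to $\tfrac12$; from $\tfrac12$ move to $0$. Each state outside $\mathcal{G}$ satisfies $\Delta W=-\min(W,1)$ with equality. Yet $\mathbb{E}\{K\}=\tfrac13\cdot1+\tfrac23\cdot3=\tfrac73>2=\lceil 2\rceil$. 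Replacing $\tfrac32,\tfrac12$ by $1+\delta,\delta$ (with transition probability $1/(1+\delta)$) gives $\mathbb{E}\{K\}\to3=W(\xb_0)/\eta+1$, so your fallback bound is sharp. This is also why the majorant you propose cannot be improved: $w/\eta+1$ is the least concave majorant of $\lceil w/\eta\rceil$ on $(0,\infty)$. The ceiling bound is what one gets for deterministic dynamics, where no averaging occurs. The right conclusion is therefore that the statement must be weakened to $\mathbb{E}\{K\}\le W(\xb_0)/\eta+1$. It is not that the weaker bound ``matches'' the stated one, since $W_0/\eta+1>\lceil W_0/\eta\rceil$ always.

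Your fallback for the sign of $W$ on $\mathcal{G}$ also fails. Def.~\ref{def:FSCLF} only requires $W\le0$ on $\mathcal{G}$, and passing to $\max(W,0)$ does not preserve the decrease condition. Convexity gives $\mathbb{E}\{\max(W,0)\}\ge\max(\mathbb{E}\{W\},0)$, which is the wrong direction for an upper bound. This is not merely a technical loss. Take states $s_n$ ($n\ge2$) outside $\mathcal{G}$ with $W(s_n)=2$, goal states $g_n$ with $W(g_n)=-n^2$, and transitions $s_n\to g_n$ with probability $1/n^2$ and $s_n\to s_{n+1}$ otherwise. Then $\mathbb{E}\{W\}=1-2/n^2\le 2-\min(2,1)$ at every $s_n$. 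Yet starting from $s_2$, the probability of never reaching $\mathcal{G}$ is $\prod_{n\ge2}(1-1/n^2)=\tfrac12$. So $W=0$ on $\mathcal{G}$ (equivalently $W\ge0$) is a genuine additional hypothesis, one the conclusion $W(\xb_K)=0$ already presupposes. It is not a convention you can reduce to.
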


\subsection{Conformal Prediction}
\label{sec:CP}
Conformal Prediction (CP) is a lightweight statistical tool for uncertainty quantification that can enable practical safety guarantees for autonomous systems \cite{shafer2008tutorial, lindemann2024formal}. 
\begin{lemma}
    Let $Z, Z^{(1)},\dots, Z^{(k)}$ be k + 1 independent and identically distributed (i.i.d.) real-valued random variables. Let $Z^{(1)},\dots, Z^{(k)}$ be sorted in non-decreasing order and define $Z^{k+1} := \infty$. For $\delta \in (0, 1)$, it holds that 
    \begin{align*}
        \mathrm{Pr}\left[Z \leq \bar{Z}\right] \geq 1- \delta
    \end{align*}
    where $\bar{Z} = Z^{(r)}$ with $ r = \lceil(k+1)(1-\delta)\rceil$
    and where $\lceil\cdot\rceil$ is the ceiling function.
    \label{lemma:CP}
\end{lemma}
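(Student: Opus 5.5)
The argument rests on exchangeability: among the $k+1$ i.i.d. variables, $Z$ is equally likely to take any of the $k+1$ ranks. We make this precise and then count ranks. First we reduce to the case without ties. If the common distribution has atoms, we either break ties uniformly at random or use the convention that $Z$ ties with a calibration point count in its favour. Both choices only make $\{Z \le Z^{(r)}\}$ larger, so it suffices to handle the continuous case, or equivalently to work with ranks under random tie-breaking.

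Let $R$ be the rank of $Z$ among $\{Z, Z^{(1)},\dots,Z^{(k)}\}$, with ties broken uniformly at random. The joint law of these $k+1$ variables is invariant under permutations, because they are i.i.d. Hence $R$ is uniformly distributed on $\{1,\dots,k+1\}$, and so
\[
\mathrm{Pr}[R\le r]=\frac{r}{k+1}
\]
for every integer $r\in\{1,\dots,k+1\}$. This exchangeability step is the core of the proof. The remaining steps are bookkeeping about order statistics and the $\infty$ convention.

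Next we relate the event $\{R\le r\}$ to the event $\{Z\le Z^{(r)}\}$, where $Z^{(r)}$ is the $r$-th smallest calibration value. If $R\le r$, then at most $r-1$ of the calibration values lie strictly below $Z$. So the $r$-th smallest calibration value is at least $Z$, which gives $\{R\le r\}\subseteq\{Z\le Z^{(r)}\}$.

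When $r=k+1$, which happens if $\delta<1/(k+1)$, we have $Z^{(k+1)}=\infty$ by definition. The inclusion then holds trivially, and the bound $\mathrm{Pr}=1\ge 1-\delta$ is immediate.

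Combining the pieces with $r=\lceil (k+1)(1-\delta)\rceil$ gives
\[
\mathrm{Pr}[Z\le \bar Z]\ \ge\ \frac{\lceil (k+1)(1-\delta)\rceil}{k+1}\ \ge\ 1-\delta .
\]

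The only delicate point is the handling of ties, where one must check that the inclusion $\{R\le r\}\subseteq\{Z\le Z^{(r)}\}$ survives. With the non-strict inequality $Z\le Z^{(r)}$, ties only help, so the inclusion holds under random tie-breaking. If needed, one can instead perturb the variables by independent infinitesimal continuous noise and pass to the limit.
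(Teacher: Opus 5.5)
Your proof is correct. The paper does not prove this lemma; it imports it from the conformal prediction literature it cites, and the standard proof there is the same exchangeability argument you give: $Z$ has a uniform rank $R$ under random tie-breaking, $\{R\le r\}\subseteq\{Z\le Z^{(r)}\}$ (with $Z^{(k+1)}=\infty$ covering $r=k+1$), and $\lceil (k+1)(1-\delta)\rceil/(k+1)\ge 1-\delta$. One small tidy-up: tie-breaking does not change the event $\{Z\le Z^{(r)}\}$, only $R$, so the opening claim that it makes that event ``larger'' is inaccurate, and the reduction to the continuous case is unnecessary, since the inclusion you verify later already handles atoms.
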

The minimum number of required samples to provide a probabilistic guarantee is given by 
$k \geq \frac{1 - \delta}{\delta}$. The variable $Z$ is usually referred to as the \emph{non-conformity score}.

\section{Problem Setting}
This paper is concerned with the problem of making a robot reach a desired goal set in state space while avoiding a set of unsafe states when the initial state is uncertain and can only be estimated based on noisy partial observations. We consider continuous-time systems whose nonlinear dynamics are described by stochastic differential equations (SDEs) in control-affine form as in Eq.~\eqref{eq:SDE}. 
% We assume that the diffusion term $\bm{\sigma}\ofx$ is a globally Lipschitz non-degenerate diagonal matrix and the drift term $\left(\bm{f}\left(\bm{x}\right) + \bm{g}\left(\bm{x}\right)\bm{u}\right)$ is a locally Lipschitz function that can have jumps. 
% These assumptions ensure that Eq.~\eqref{eq:SDE} has a unique global strong solution CITE.

At discrete time steps $t_k$, the robot makes an observation $\zb$ that is a nonlinear stochastic function of the state
\begin{align}
    \bm{z}_{t_k} &= \bm{\ell}\left(\bm{x}_{t_k}, \bm{v}_{t_k}\right)\label{eq:observation}
\end{align}
with measurement noise $\bm{v}_{t_k} \sim p(\bm{v})$. Measurements only occur at discrete timesteps $t_1,\ldots, t_k$ due to limited update rates of sensors.
Thus, the exact state at time $t$ is unknown, but a Bayesian posterior $p\left(\bm{x}_t \mid \bm{z}_{t_1:t_k}\right)$, for $t_k \leq t < t_{k+1}$  describes the probability distribution over states conditioned on past measurements.

In its most general form, the evolution of the posterior distribution can be seen as a stochastic hybrid system that has two different evolutions: 

1) In periods where no measurements are available, the belief evolves according to the Fokker-Planck-Kolmogorov partial differential equation~\cite{jazwinski2007stochastic}
\begin{multline}
    \frac{\partial p(\xb_t\mid\zb_{t_1:t_k})}{\partial t} = - \sum_{i=1}^{n_x} \frac{\partial}{\partial x_i}\left(p(\xb_t\mid\zb_{t_1:t_k})(\cdot)_i\right) +\\
     \frac{1}{2}\sum_{i,j=1}^{n_x} \frac{\partial^2}{\partial x_i x_j} \left(p(\xb_t\mid\zb_{t_1:t_k}) \left(\bm{\sigma}\ofx \bm{\sigma}\ofx^T\right)_{ij}\right)\label{eq:FokkerPlanck}
\end{multline}
where $(\cdot) = (\bm{f} \ofx + \bm{g} \ofx \ub)$ and the subscripts $i$ and $j$ refer to $i$th and $j$th element, respectively. 

2) Whenever a new measurement is available, the state distribution is updated using Bayes' rule
\begin{align}
    p(\xb_t\mid\zb_{t_1:t_{k+1}}) = \frac{p(\zb_{t_{k+1}}\mid\xb_{t})p(\xb_{t}\mid\zb_{t_1:t_{k}})}{p(\zb_{t_{k+1}}\mid\zb_{t_1:t_{k}})}.\label{eq:Bayes}
\end{align}
However, in general, neither the continuous evolution nor the discrete update admit closed-form solutions for nonlinear system which is why a variety of approximations have been proposed \cite{sarkka2006recursive, mallick2012continuous}.

The approach taken in this work is to model the reach-avoid problem in belief space and to use finite sample guarantees on the resulting belief states. Thus, we seek to find the sets in belief space such that convergence to that set implies the robot reaching its goal set, denoted by $\mathcal{S}_g \subset \mathcal{X}$, with desired probability while ensuring that the state does not enter a set of unsafe states denoted by $\mathcal{S}_a\subset \mathcal{X}$. Formally, we define the problem hereafter.
\begin{problem}[Reach-Avoid POMDP]
    \label{prob1}
    Given a nonlinear stochastic system in Eq.~\eqref{eq:SDE} and stochastic measurements according to Eq.~\eqref{eq:observation} with a goal set $\mathcal{S}_g \subset \mathcal{X}$ and avoid set $\mathcal{S}_a\subset \mathcal{X}$, find control inputs $\ub \in \mathcal{U}$ such that for all states starting from a given initial distribution $\xb_0 \sim p(\xb_0)$ the resulting trajectory satisfies
    \begin{align}
        &\textbf{reach}: &&\exists t\geq 0 \text{ s.t }\mathrm{Pr}[\xb_t \in \mathcal{S}_g] \geq 1 - \delta_g\\
        &\textbf{avoid}: &&\mathrm{Pr}[\xb_t \notin \mathcal{S}_a, \forall t\in[0,T]] \geq 1 - \delta_a
    \end{align}
    with user defined probability thresholds $\delta_g$ and $\delta_a$.
\end{problem}
Here, we want to emphasize that our problem setting is a subset of a general POMDP since both problems are concerned with the control of partially observable systems. In the more general POMDP setting, the objective is to maximize the expected cumulative discounted reward. Specifically, our problem can be seen as a special case of a constrained POMDP with set-based reach and avoid objectives. Additionally, POMDPs are typically defined in discrete-time while we consider the continuous-discrete setting commonly arising in robotics and control problems.
\begin{figure}
    \centering
    \includegraphics[width=0.5\textwidth]{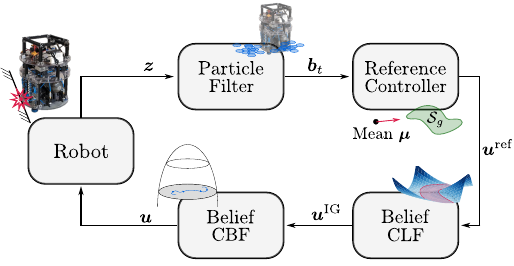}
    \vspace{-0.4cm}
    \caption{Illustration of our proposed control architecture on the example of a free floating robot platform that uses impact detections as measurements. The robot uses measurements $\zb$ to update a particle filter belief $\bb$ at time $t$. Based on the current belief, the reference controller obtains an input based on the mean state $\bm{\mu}$, the belief CLF serves as information gathering controller and the belief CBF minimally corrects all unsafe control inputs.}
    \vspace{-0.4cm}
    \label{fig:ControlStructure}
\end{figure}

\subsection{Overview}
To address Problem 1, we formulate a belief space control problem which is composed of different control strategies that are combined into one control architecture depicted in Fig.~\ref{fig:ControlStructure}. First, we introduce a continuous-discrete particle filter as approximated belief dynamics model of the general posterior evolution in Eq.~\eqref{eq:FokkerPlanck} and~\eqref{eq:Bayes}. Then, we decompose the overall control strategy into three modules that address different subproblems of the reach-avoid POMDP. We argue that, e.g., information gathering only needs to happen at the frequency of measurements while the safety filter needs high frequency updates to ensure that the continuous-time system does not enter the avoid set. Specifically, we make the following design choices:
\begin{description}
    \item[1)]A state-based \textbf{reference} controller is used which acts on the mean state of the particle filter belief and drives the mean state towards the goal region. 
    \item[2)] An \textbf{information gathering controller} (BCLF) is used to actively reduce uncertainty in the robot's belief in the direction of the state-based reference. If the IG controller localizes the state sufficiently well, the robot will reach its desired goal location.
    \item[3)]A \textbf{belief safety filter} (BCBF) that ensures that the probability of entering the avoid set is bounded during operation. This safety filter corrects all potentially unsafe controls obtained by the reference and IG controllers. 
\end{description}
In the control design of the IG controller and the safety filter, we specifically design certificate-based controllers, i.e. we show how information gathering can be interpreted as finding a valid Control Lyapunov Function and how risk-aware safety boils down to finding a Control Barrier Function in belief space.

\section{Belief Space Model}
The general Bayesian posterior $p(\xb_t \mid \zb_{t_1:t_k})$ evolves according to the Fokker--Planck PDE between measurements and a nonlinear Bayes update at measurement times (Eqs.~\eqref{eq:FokkerPlanck} \& \eqref{eq:Bayes}). For nonlinear, continuous-time systems, this belief evolution is intractable to compute or propagate in closed form, and its infinite-dimensional nature prohibits direct use in control synthesis. To obtain a tractable yet expressive approximation, we represent the belief with a continuous--discrete particle filter, which provides a sample-based approximation of the posterior while preserving non-Gaussian structure. Under this approximation, the belief dynamics become a high-dimensional stochastic hybrid system in which particles follow the continuous-time SDE~\eqref{eq:SDE} between measurements and undergo a discrete Bayesian update through weighting and resampling when new observations arrive. This representation enables explicit belief-space control design while maintaining the ability to capture multimodal and highly nonlinear uncertainties inherent in our setting.

PF techniques stem from the idea of approximating the Bayesian posterior by means of a finite set of $N$ weighted samples,  i.e. particles, $\{(\bm{x}_t^{(i)}, w_t^{(i)})\}_{i=1}^N$, where $w_t^{(i)} \in \mathbb{R}_{\geq 0}$. The PF is a non-parametric approach that approximates the true posterior as
 \begin{align*}
    p\left(\bm{x}_t\right) \approx \sum_{i=1}^N w_t^{(i)} \delta\left(\bm{x}_t - \bm{x}_t^{(i)}\right)
\end{align*}
where $\delta(\cdot)$ denotes the Dirac delta function. It is known that as $N \rightarrow \infty$, the approximated posterior converges to the true posterior \cite{984773}.
Thus, we define a belief state
\begin{align*}
    \bm{b}_t = \begin{bmatrix}\bm{x}_t^{(1)} & \hdots & \bm{x}_t^{(N)}\end{bmatrix}^T \in \mathbb{R}^{N \cdot n_x}.
\end{align*}
which describes the current PF belief at time $t$ as a collection of all particles. 
The continuous time dynamics of the belief state $\bm{b}$ during the period where no observations are available is obtained by propagating each particle through the nonlinear SDE in Eq.~\eqref{eq:SDE}, resulting in
\begin{align}
    \mathrm{d}\bm{b}_t 
    &= \left(\begin{bmatrix}
        \bm{f}\left(\bm{x}_t^{(1)}\right) \\
        \vdots\\
        \bm{f}\left(\bm{x}_t^{(N)}\right)  
    \end{bmatrix} + \begin{bmatrix}
        \bm{g}\left(\bm{x}_t^{(1)}\right) \\
        \vdots\\
        \bm{g}\left(\bm{x}_t^{(N)}\right)  
    \end{bmatrix}\bm{u}\right) \mathrm{d}t + \begin{bmatrix}
    \bm{\sigma} ~\mathrm{d}\bm{W}_t^{(1)}\\
    \vdots\\
    \bm{\sigma}~ \mathrm{d}\bm{W}_t^{(N)}
    \end{bmatrix}\nonumber\\
    &:= \left(\bm{f}_b\left(\bm{b}\right) + \bm{g}_b \left(\bm{b}\right) \bm{u}\right) \mathrm{d}t + \bm{\Sigma} ~ \mathrm{d}\tilde{\bm{W}}\label{eq:BeliefDynamics}
\end{align}
where $\bm{\Sigma} = \mathrm{BD}\left(\{\bm{\sigma}\}_{i=1}^N\right)$ is a block diagonal matrix and $\tilde{\bm{W}}$ is a Brownian motion of dimension $N \cdot q$. 
Note that, although the belief dynamics are of high dimension, the individual state particles are entirely decoupled which means that the SDE in Eq.~\eqref{eq:BeliefDynamics} can be solved efficiently using parallelization. Further, the belief dynamics stay control-affine which alleviates the control design under high-dimensional belief states.

At discrete times $t_k=t_k^+$ where observations are available, the PF belief is conditioned on the most recent measurement which leads to a discrete update of the belief state, i.e. $\bm{b}(t_k^+) = \bm{\Delta}(\bm{b}(t_k^-), \bm{z}_k)$ where $t_k^-$ is infinitesimally smaller than $t_k^+$. 
This discrete map $\bm{\Delta}(\cdot)$, however, cannot be obtained in closed form and is performed as a stochastic resampling step. Specifically, each particle $\xb_t^{(i)}$ is weighted and resampled according to its observation likelihood $w_t^{(i)} = p(\zb_t\mid \xb_t^{(i)})$. We perform low-variance resampling, which reduces particle impoverishment and yields a more stable approximation of the posterior distribution.
For a detailed description of this step, the reader is referred to \cite{thrun2005probabilistic}. 

Finally, combining the SDE in Eq.~\eqref{eq:BeliefDynamics} with the discrete-time measurement update leads to a hybrid system formulation
% \begin{align}
%     \mathcal{S} = 
%     \begin{cases}
%         \dot{\bm{b}} = \bm{f}_b\left(\bm{b}\right) + \bm{g}_b\left(\bm{b}\right) \bm{u}, & \forall t \in [t_{k-1}, t_k)\\
%         \bm{b}^+ = \bm{\Delta}\left(\bm{b}^-, \zb_k\right) & t = t_k,
%     \end{cases}\label{eq:hybridsys}
% \end{align}
\begin{equation}
\label{eq:hybridsys}
    \mathcal{S}=
    \begin{cases}
        \text{d}{\bb} = (\bm{f}_b(\bb) + \bm{g}_b(\bb)\ub)\text{d} t + \bm{\Sigma}~\text{d}\Tilde{\bm{W}}, \enskip \forall t\in[t_{k-1},t_k)\\
        \bm{b^+ = \Delta(b^-,\zb_{t_k})},\enskip t = t_k.
    \end{cases}
\end{equation}
describing the evolution of the robot's belief over time.

\begin{assumption}
    \label{ass:iid}
    Throughout this paper, we assume that the resampled particles after a measurement update consist of $N$ i.i.d. samples from the true belief.
\end{assumption}
While this assumption does not hold for a small number of particles, we know that the resampled particles become asymptotically i.i.d. from the true distribution as $N\rightarrow\infty$. We rely on this assumption to make statistical statements about safety and performance using Conformal Prediction. Additionally, different resampling schemes such as independent resampling, proposed in \cite{lamberti2017independent, lamberti2017semi}, can be used to make the resampling step independent.

\subsection{Uncertainty Quantification Of The Belief}
In order to design a controller that actively gathers information through measurements, we first need to be able to quantify what information means in this context. Various information theoretic measures have been proposed in the past and have been adopted for informative path planning in different applications. A widely used uncertainty measure for continuous probability distributions is the differential entropy
\begin{align}
    \mathcal{H} \ofx = - \int_{\mathcal{X}} p\ofx \mathrm{log}(p\ofx) \text{d}\xb
\end{align}
which quantifies the spread of a continuous random variable $\xb$ based on its probability density function. Differential entropy has been used in many planning and control problems under uncertainty, also in combination with particle filters \cite{hoffmann2009mobile}. However, when substituting the unweighted particle filter belief it becomes apparent that the entropy approximates as
\begin{align}
    \mathcal{H}\ofx \approx - \sum_{i=1}^N \frac{1}{N} \mathrm{log}\left(\frac{1}{N}\right) = \text{const.}
\end{align}
which does not capture any information during a continuous time period. Even if a weighted formulation of the particle filter is used, the discrete approximation fails to capture the spatial distribution of particles but only depends on the weights, i.e. the observation likelihood of particles \cite{brudermuller2024tactile}. Thus, using differential entropy as a uncertainty quantification is avoided in this work since it is poorly approximated in particle filter beliefs.

To that end, we introduce a new measure of uncertainty tailored to PF-based beliefs. The key idea builds on the assumption that the particle filter provides independent samples from the underlying belief distribution. To characterize how concentrated this belief is, we consider a ball of radius $\varepsilon$ around the mean state estimate whose size reflects our uncertainty: the smaller the ball, the more confidently the robot is localized. Conformal prediction applied to the particle samples enables us to construct such a ball with a guaranteed probability of containing the true state, thereby providing a distribution-free measure of uncertainty.

We define the nonconformity score
\begin{align}
    \rho\ofx = \lVert \xb - \mathbb{E}\{\xb\}\rVert_2\label{eq:localizability}
\end{align}
\begin{figure}
    \centering
    \includegraphics[width=0.49\textwidth]{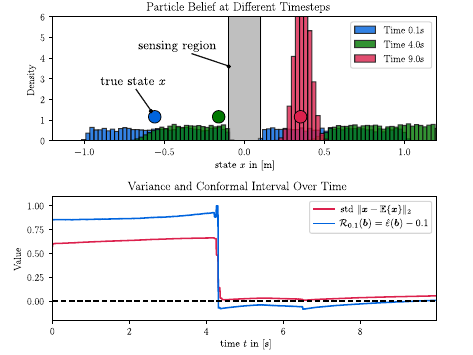}
    \vspace{-0.4cm}
    \caption{Illustration of an example with a one dimensional state space. In this environment, the state can be accurately localized in a sensing region $\{ x \in \mathbb{R}\mid |x| \leq 0.1\}$ around the origin. The true state evolves according to a continuous-time SDE. The bottom plot shows the proposed uncertainty quantification as well as the belief empirical standard deviation over time.}
    \vspace{-0.4cm}
    \label{fig:UQ}
\end{figure}
which, if smaller than some user-defined threshold $\varepsilon$, would imply that the state is contained in an $\varepsilon$ ball around the mean state estimate. However, as the state is not observable, we rely on probabilistic statements using conformal prediction. By Assumption 1, we can propagate all particles through the nonlinear function \eqref{eq:localizability} to obtain i.i.d. samples $\rho(\xb^{(1)}), \dots, \rho(\xb^{(N)})$ sorted in non-decreasing order. Then, by application of Lemma \ref{lemma:CP}, we obtain $\hat{\varepsilon} = \rho(\xb^{(r)})$ with $r = \lceil (N+1)(1 - \delta_l)\rceil$ such that 
\begin{align}
    \mathrm{Pr}\left[\lVert \xb - \mathbb{E}\{\xb\}\rVert_2 \leq \hat{\varepsilon}\right] \geq 1 - \delta_l
\end{align}
given a desired probability threshold $\delta_l$ on the localization of the state. Finally, we define our measure of uncertainty of the particle filter belief $\bb$ as
\begin{align}
    \mathcal{R}_{\varepsilon}(\bb) = \hat{\varepsilon}(\bb) - \varepsilon
\end{align}
with a desired size of the ball $\varepsilon$. In turn our measure satisfies the property $\mathcal{R}_{\varepsilon}(\bb) \leq 0 \implies \mathrm{Pr}\left[\lVert \xb - \mathbb{E}\{\xb\}\rVert_2 \leq \varepsilon\right] \geq 1 - \delta_l$. We illustrate our proposed uncertainty quantification on a toy example.
\begin{example}
\label{exm:toy}
    Consider a robot operating in one dimension with dynamics $\text{d}x = u~\text{d}t + 0.3~\text{d}W$ where the robot is moving to the right with $u = 0.1$ as shown in Fig. \ref{fig:UQ}. At $|x| \leq 0.1$ is a sensing region that provides a binary measurement, i.e. $z = 1$ if $|x| \leq 0.1$ and $z=0$ otherwise. The initial state distribution is given as a uniform distribution $\mathcal{U}(-1, 1)$. We use $N=2000$ particles to approximate the true posterior and, thus, obtain the constant entropy approximation $\mathcal{H}\approx 7.6$ which does not capture any meaningful information. In Fig. \ref{fig:UQ}, we further visualize our proposed uncertainty quantification metric $\mathcal{R}_{0.1}$ which shortly after 4 seconds (after the robot receives a measurement) becomes negative meaning that the true state is contained within a ball $\mathcal{B}_{0.1}(\mathbb{E}\{x\})$ around the mean state with a desired probability of 95\% . We also plot the empirical standard deviation over time which follows a similar trend as $\mathcal{R}_{0.1}$ but does not admit a statement about the true state in the non-gaussian case.
\end{example}

\section{Belief Space Lyapunov Control}
In this section, we propose our Lyapunov inspired controller for information gathering. Thus, we will tackle the problem of reaching the goal set in state space with desired probability by designing a Lyapunov function in belief space that enforces convergence to a \textit{set of localized beliefs}. 

Let us first formally define the goal of information gathering: we seek to find control inputs $\ub$ such that the belief $\bb$ will end up in the belief goal set
\begin{align}
    \mathcal{S}_b = \left\{\bb \in \mathcal{B} \mid \mathcal{R}_{\varepsilon}(\bb) \leq 0 \bigwedge \mathbb{B}_{\epsilon}\left(\mathbb{E}\{\xb\} \right)\subseteq \mathcal{S}_g\right\}
\end{align}
which is the set of all belief states for which the true state $\xb$ will be contained in an $\varepsilon$ ball around the mean state estimate with desired probability \textit{and} this ball being contained in the goal set in state space. Thus, we get the relationship $\bb \in \mathcal{S}_b \implies \mathrm{Pr}[\xb \in \mathcal{S}_g] \geq 1 - \delta_l$ which is the reach objective formulated in Problem \ref{prob1}.

When further inspecting the goal set in belief space, we can see that it consists of two conceptually distinct requirements: first, the belief must be sufficiently concentrated so that the true state lies inside an $\varepsilon$–ball around the mean with the desired probability, and second, this ball must itself lie inside the state–space goal set. This observation allows us to decouple the information-gathering objective from the goal-reaching objective. The first requirement concerns only the uncertainty of the belief, whereas the second concerns the location of the mean state estimate.

This decoupling enables a modular control design. We can construct a controller whose sole purpose is to drive the belief toward informative regions, that is, regions where $\mathcal{R}_{\varepsilon}(\bb)$ becomes negative. This motivates the design of a stochastic Lyapunov function in belief space that ensures convergence toward such informative areas. Independently of this, we can employ a separate state-based controller that drives the mean state estimate toward the goal region $\mathcal{S}_g$ whenever the uncertainty is sufficiently low.

In the subsequent sections, we formalize this decomposition. We first design an information-gathering controller that renders the uncertainty component of $\mathcal{S}_b$ attractive. Then, we combine it with a goal-reaching controller that enforces progress toward $\mathcal{S}_g$ while gathering information. The resulting modular architecture allows us to guarantee the reach objective with desired probability while keeping information gathering and goal reaching cleanly separated in the control design.

Since information can only be gathered through measurements, we design a controller for a time-discretized system of the hybrid system in Eq. \eqref{eq:hybridsys}. Here, we apply an Euler integration scheme to obtain the stochastic difference equations
\begin{align}
    \bar{\bb}_{k+1} &= \bb_k + \delta t (\bm{f}_b(\bb_k) + \bm{g}_b(\bb_k) \ub_k) + \sqrt{\delta t~\bm{W}_k}\bm{\sigma}(\bb_k) \\
    \bb_{k+1} &= \Delta(\bar{\bb}_{k+1}, \bm{z}_{k+1}):=\bm{F}(\bb_k, \ub_k, \bm{w}_k)\label{eq:discretebeliefdyn}
\end{align}
which we summarize as the overall discrete-time dynamical system $\bb_{k+1} = \bm{F}_b(\bb_k, \ub_k, \bm{w}_k)$ where the random variable $\bm{w}_k$ combines the stochasticity arising from both, the motion noise $\bm{W}_k$ and the stochasticity in resampling after a measurement $\zb_{k+1}$. For this discrete-time dynamical system, we seek to design a stochastic CLF acting on the belief state.

\begin{definition}
\label{def:BCLF}
    A function $W_b :\mathcal{B}\mapsto\mathbb{R}$ is a belief Control Lyapunov Function (CLF) for the the system $\bb_{k+1} = \bm{F}_b(\bb_k, \ub_k, \bm{w}_k)$ if it satisfies
    \begin{align}
        &W_b(\bb) \leq 0 &&\forall \bb: \mathcal{R}_{\varepsilon}(\bb) \leq 0\\
        &W_b(\bb) > 0 &&\forall \bb: \mathcal{R}_{\varepsilon}(\bb) > 0\\
        &\mathbb{E}\left\{W(\bm{F}_b(\bb_{k}, \ub_k, \bm{w}_k))\right\} \leq c W(\bb_k)&&\forall \bb: \mathcal{R}_{\varepsilon}(\bb) > 0, 
    \end{align}
    for some $c \in (0, 1)$.
\end{definition}
% If a function satisfies Def. X, we get the following result.
A Belief CLF provides a systematic way to drive the belief toward regions where the state can be
estimated with the desired accuracy, i.e. a region in belief space in which the true state will be contained in an $\varepsilon$-ball around the mean state. Once such a function is available, we can determine at each
measurement step how to control the discrete-time system to asymptotically reach the set of localized beliefs.
\begin{figure}
    \centering
    \includegraphics[width=0.5\textwidth]{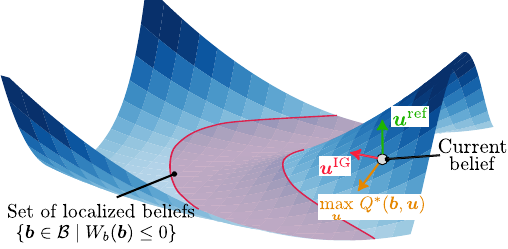}
    \vspace{-0.4cm}
    \caption{Example visualization of a belief CLF. The blue surface depicts the Lyapunov function with the current belief shown in grey. The maximum information gathering control input is the steepest direction on the BCLF, the reference control input points in a direction where the BCLF value increases and the optimized information gathering control input moves in the direction of the reference while enforcing convergence to the set of localized beliefs.}
    \vspace{-0.4cm}
    \label{fig:BCLFVis}
\end{figure}

To implement the goal-reaching component, we assume access to a state–based reference controller
$\ub^{\mathrm{ref}}(\hat{\xb})$ that would reach $\mathcal{S}_g$ if the estimation error were within the
$\varepsilon$–ball. This controller encodes the nominal task objective and is independent of the
belief. The Belief CLF then ensures that we guide the information gathering towards regions that will end up close to the goal. 
We combine these two elements through the optimization problem
\begin{equation}
        \begin{aligned}
        \bm{u}^{\mathrm{IG}} = \arg & \min_{\bm{u}_k \in \mathcal{U}}~   \lVert\bm{u}_k - \bm{u}^\mathrm{ref}\rVert  \\
        \text{s.t.} \quad &\mathbb{E}\left\{W_b(\bm{F}(\bb_{k}, \ub_k, \bm{w}_k))\right\} \leq c W_b(\bm{b}_k) \text{~if~} \mathcal{R}_{\varepsilon}(\bb) > 0.
        \end{aligned}
    \label{eq:LyapControl}
\end{equation}
In words, if the belief state is not inside the goal set in belief space, we try to minimally deviate from the state-based reference control input while still gathering information. If the goal set $\mathcal{S}_b$ is reached, we switch to the reference controller $\ub^{\mathrm{ref}}$. This control strategy is conceptually visualized in Fig.~\ref{fig:BCLFVis}.
\begin{theorem}
Suppose a Belief CLF $W_b$ exists for the belief dynamics in Eq.~\eqref{eq:discretebeliefdyn}, and suppose the
reference controller $\ub^{\mathrm{ref}}$ renders the state–space goal set $\mathcal{S}_g$ reachable
whenever the estimation error is bounded by $\varepsilon$. Then the closed-loop control law
$\ub^{\mathrm{IG}}$ defined in \eqref{eq:LyapControl} ensures
\[
\lim_{k \to \infty} \Pr[\xb_k \in \mathcal{S}_g] \;\ge\; 1 - \delta_l,
\]
i.e. the state converges asymptotically to the goal set in state space with desired probability.
\end{theorem}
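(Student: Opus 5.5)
The plan is to split the argument into two phases, matching the structure of $\mathcal{S}_b$: an information-gathering phase, in which the belief reaches the localized set $\{\bb:\mathcal{R}_{\varepsilon}(\bb)\le 0\}$, and a goal-reaching phase, in which the reference controller is applied. The probabilistic statement about $\xb_k$ then follows from the conformal prediction bound that defines $\mathcal{R}_{\varepsilon}$.

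\textbf{Phase 1: reaching the localized set.} Since $W_b$ is a Belief CLF in the sense of Def.~\ref{def:BCLF}, the constraint in \eqref{eq:LyapControl} is feasible at every $\bb_k$ with $\mathcal{R}_{\varepsilon}(\bb_k)>0$. The minimizer $\ub^{\mathrm{IG}}$ satisfies this constraint by construction, so the closed loop has
\[
\mathbb{E}\{W_b(\bb_{k+1})\mid \bb_k\}\le c\,W_b(\bb_k)
\]
outside the localized set. This is exactly the hypothesis of Theorem~\ref{thm:SCLF}, applied to the discrete-time belief system $\bb_{k+1}=\bm{F}_b(\bb_k,\ub_k,\bm{w}_k)$ with goal set $\{\mathcal{R}_{\varepsilon}\le 0\}$. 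Invoking it gives $W_b(\bb_k)\to 0$ almost surely, so the belief converges almost surely to the set of localized beliefs.

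\textbf{Main obstacle.} Asymptotic convergence of $W_b$ to zero does not by itself ensure that $W_b(\bb_k)\le 0$ at some finite $k$, and that is what triggers the switch to $\ub^{\mathrm{ref}}$. I would handle this in one of two ways:
\begin{enumerate}
    \item Argue that $W_b\le 0$ on the goal set, so that whenever the process enters it the supermartingale argument is stopped. Combined with the discrete, resampling-driven jumps of $\mathcal{R}_{\varepsilon}$, this should give entry in finite time almost surely.
    \item Strengthen the hypothesis to the finite-time version of Def.~\ref{def:FSCLF}. Theorem~\ref{thm:FSCLF} then yields an almost surely finite hitting time $K$.
\end{enumerate}
I would first try the first route and fall back to the second if it does not go through. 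A related issue is that the localized set need not be invariant once we switch to $\ub^{\mathrm{ref}}$. I would address this by noting that the switching rule in \eqref{eq:LyapControl} reactivates the CLF constraint whenever $\mathcal{R}_{\varepsilon}>0$ again. As a result, the closed loop spends all but finitely many steps, or asymptotically all time, in the localized set.

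\textbf{Phase 2: from localization to the goal.} Once $\mathcal{R}_{\varepsilon}(\bb_k)\le 0$, the conformal prediction construction via Lemma~\ref{lemma:CP}, which relies on Assumption~\ref{ass:iid}, gives
\[
\mathrm{Pr}\left[\lVert\xb_k-\mathbb{E}\{\xb_k\}\rVert_2\le\varepsilon\right]\ge 1-\delta_l.
\]
On this event the estimation error is bounded by $\varepsilon$. The hypothesis on $\ub^{\mathrm{ref}}$ then drives the mean so that $\mathbb{B}_{\varepsilon}(\mathbb{E}\{\xb\})\subseteq\mathcal{S}_g$, i.e.\ $\bb_k\in\mathcal{S}_b$. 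Using the implication $\bb\in\mathcal{S}_b\Rightarrow\mathrm{Pr}[\xb\in\mathcal{S}_g]\ge 1-\delta_l$ stated after the definition of $\mathcal{S}_b$, and taking $\liminf$ over $k$, we obtain $\lim_{k\to\infty}\mathrm{Pr}[\xb_k\in\mathcal{S}_g]\ge 1-\delta_l$.
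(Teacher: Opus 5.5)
Your two-phase argument is the paper's proof. The paper applies Theorem~\ref{thm:SCLF} to the closed-loop belief system to get $W_b(\bb_k)\to 0$ almost surely. It reads this directly as $\lim_{k\to\infty}\Pr[\lVert\xb_k-\mathbb{E}\{\xb_k\}\rVert\le\varepsilon]\ge 1-\delta_l$. It then asserts that $\ub^{\mathrm{ref}}$ asymptotically places the $\varepsilon$-ball around the mean inside $\mathcal{S}_g$. The paper never addresses the finite-entry and invariance issues you raise. So your Phase~1 and Phase~2 already match it at its level of rigor, and the obstacles you identify are gaps in the paper's argument as well.

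Your proposed repairs, however, do not close those gaps for the theorem as stated.

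Route~(1) would fail, for three reasons. First, the condition $\mathbb{E}\{W_b(\bb_{k+1})\mid\bb_k\}\le c\,W_b(\bb_k)$ is compatible with $W_b(\bb_k)>0$ for every $k$; the deterministic sequence $W_b(\bb_{k+1})=c\,W_b(\bb_k)$ already does this while converging to $0$. Second, stopping the supermartingale at entry says nothing about whether entry occurs. Third, $\mathcal{R}_{\varepsilon}$ is an order statistic of the distances from the particles to their mean, hence continuous in the particle positions. Nothing in Definition~\ref{def:BCLF} forces the resampling jumps to carry it below zero.

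Route~(2) does give an almost surely finite entry time. It does so only by replacing the hypothesis with Definition~\ref{def:FTBCLF}, which is the setting of Corollary~\ref{cor:FTreach} rather than of this theorem.

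Your invariance argument has the same problem. Reactivating the CLF constraint whenever $\mathcal{R}_{\varepsilon}>0$ only yields an expected contraction during each excursion. It does not prevent $\ub^{\mathrm{ref}}$ from pushing the belief out of $\{\mathcal{R}_{\varepsilon}\le 0\}$ infinitely often. Your claim that the loop spends all but finitely many steps in the localized set therefore needs an extra assumption, such as forward invariance of that set under $\ub^{\mathrm{ref}}$. Making such assumptions explicit, together with one that guarantees entry, is what it would take to turn either your sketch or the paper's proof into a rigorous argument.
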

\begin{proof}
Since $W_b$ is a Belief CLF, the control law $\ub^{\mathrm{IG}}(\bb_k)$
obtained from~\eqref{eq:LyapControl} ensures that the BCLF decrease condition of
Definition~\ref{def:BCLF} holds whenever $\mathcal{R}_\varepsilon(\bb_k) > 0$.
By Theorem~\ref{thm:SCLF}, this implies $W_b(\bb_k) \to 0$ almost surely as $k \to \infty$ and,
consequently,
\[
\lim_{k \to \infty} \Pr\!\left[\|\xb_k - \mathbb{E}\{\xb_k\}\| \le \varepsilon\right] \;\ge\; 1 - \delta_l
\]
almost surely.
That is, asymptotically, the true state remains in an $\varepsilon$–ball around the mean with
probability at least $1 - \delta_l$.

Whenever $\mathcal{R}_\varepsilon(\bb_k) \le 0$, the assumption on the reference controller
guarantees that $\ub^{\mathrm{ref}}$ drives the mean state estimate $\mathbb{E}\{\xb_k\}$
toward the goal region $\mathcal{S}_g$. Hence, asymptotically the mean
and the corresponding $\varepsilon$–ball around $\mathbb{E}\{\xb_k\}$ are fully contained within $\mathcal{S}_g$. Therefore,
\[
\lim_{k \to \infty} \Pr[\xb_k \in \mathcal{S}_g] \;\ge\; 1 - \delta_l.
\]
\end{proof}
We want to highlight that this decoupling of information gathering and goal reaching allows us to reuse a BCLF for different reach objectives as we only need to ensure a Lyapunov decrease while we minimize the deviation from a reference control input.

\subsection{Finite-Time Convergence}
In many applications, it is desirable to ensure that the state will reach the goal set within finite time. To capture this behavior, we introduce a
finite-time analogue of Belief CLFs that directly parallels the finite-time stochastic CLF
condition of Definition~\ref{def:FSCLF}.

\begin{definition}
\label{def:FTBCLF}
A function $W_b : \mathcal{B} \mapsto \mathbb{R}$ is a finite-time Belief Control Lyapunov
Function (FBCLF) for the system~\eqref{eq:discretebeliefdyn}
if it satisfies
\begin{align*}
    &W_b(\bb) \le 0 
        &&\forall \bb : \mathcal{R}_{\varepsilon}(\bb) \le 0, \\
    &W_b(\bb) > 0 
        &&\forall \bb : \mathcal{R}_{\varepsilon}(\bb) > 0, \\
    &\exists\, \ub \in \mathcal{U} :~
        \Delta W_b \le -\min\big(W_b(\bb),\, \eta\big)
        &&\forall \bb : \mathcal{R}_{\varepsilon}(\bb) > 0,
\end{align*}
for some $\eta > 0$.
\end{definition}

This decrease condition ensures that the value of $W_b$ cannot remain positive indefinitely.
Instead, it must reach zero in finite time, implying that the belief satisfies
$\mathcal{R}_{\varepsilon}(\bb_k) \le 0$ within a finite number of measurement steps. The result
below follows directly from the finite-time stochastic Lyapunov theorem \ref{thm:FSCLF}.

\begin{corollary}
\label{cor:FTreach}
Suppose $W_b$ is a finite-time Belief CLF according to Definition~\ref{def:FTBCLF}, and the
reference controller $\ub^{\mathrm{ref}}(\hat{\xb})$ renders the state-space goal set
$\mathcal{S}_g$ finite-time reachable whenever $\mathcal{R}_{\varepsilon}(\bb) \le 0$. Then there exists a
finite time $T(\bb_0) > 0$ such that
\[
\Pr[\xb_T \in \mathcal{S}_g] \ge 1 - \delta_l
\]
almost surely.
\end{corollary}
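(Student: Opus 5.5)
The plan is to mirror the proof of the preceding (asymptotic) theorem, replacing Theorem~\ref{thm:SCLF} by the finite-time result Theorem~\ref{thm:FSCLF}. The argument splits into two phases, concatenated by the strong Markov property of the discrete-time belief process $\bb_{k+1} = \bm{F}_b(\bb_k,\ub_k,\bm{w}_k)$.

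\emph{Phase one: information gathering.} At each measurement step with $\mathcal{R}_\varepsilon(\bb_k) > 0$, I would use the control law \eqref{eq:LyapControl}, with the constraint replaced by the finite-time decrease condition $\Delta W_b \le -\min(W_b(\bb_k),\eta)$. This problem is feasible because Definition~\ref{def:FTBCLF} guarantees such a $\ub$ exists. Definition~\ref{def:FTBCLF} is exactly Definition~\ref{def:FSCLF} applied to the belief system with goal set $\{\bb : \mathcal{R}_\varepsilon(\bb)\le 0\}$. Theorem~\ref{thm:FSCLF} then yields an almost surely finite stopping time $K_1(\bb_0)$, with $\mathbb{E}\{K_1\} \le \lceil W_b(\bb_0)/\eta\rceil$, such that $\mathcal{R}_\varepsilon(\bb_{K_1}) \le 0$. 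By the construction of $\mathcal{R}_\varepsilon$ via Lemma~\ref{lemma:CP} and Assumption~\ref{ass:iid}, this gives $\Pr[\lVert \xb_{K_1} - \mathbb{E}\{\xb_{K_1}\}\rVert \le \varepsilon] \ge 1-\delta_l$.

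\emph{Phase two: goal reaching.} From time $K_1$ onward, the law switches to $\ub^{\mathrm{ref}}$. By hypothesis, $\ub^{\mathrm{ref}}$ steers the mean estimate in finite time $K_2$ so that $\mathbb{B}_\varepsilon(\mathbb{E}\{\xb\}) \subseteq \mathcal{S}_g$ while $\mathcal{R}_\varepsilon(\bb)\le 0$ is maintained. I take the latter to be part of "finite-time reachable whenever $\mathcal{R}_\varepsilon(\bb)\le 0$". At $T = K_1 + K_2$ we therefore have $\bb_T \in \mathcal{S}_b$. The implication $\bb \in \mathcal{S}_b \Rightarrow \Pr[\xb\in\mathcal{S}_g]\ge 1-\delta_l$ stated after the definition of $\mathcal{S}_b$ then gives the claim. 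Since $K_1$ is a.s.\ finite and $K_2$ is finite, $T$ is finite almost surely, which accounts for the ``almost surely'' in the statement.

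\emph{Main obstacle.} The delicate step is the handoff between the phases, because nothing stops the belief from leaving $\{\mathcal{R}_\varepsilon \le 0\}$ once the reference controller takes over, e.g.\ through diffusion without informative measurements. If the reference hypothesis does not cover this, I would fall back on a switching argument. Whenever $\mathcal{R}_\varepsilon$ becomes positive again, the FBCLF law is re-engaged; each return to the localized set then occurs in a.s.\ finite time by Theorem~\ref{thm:FSCLF}. It remains to show that only finitely many such excursions occur before the goal is reached. For this I would assume a positive lower bound on the probability of completing the reference phase without re-delocalizing, and apply a geometric-trials (Borel--Cantelli) argument to conclude that $T$ is a.s.\ finite. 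A secondary subtlety is that the probability $1-\delta_l$ comes from conformal prediction at the random time $T$, which should be read as conditional on $\bb_T$. Under Assumption~\ref{ass:iid}, Lemma~\ref{lemma:CP} applies to the resampled particles at whichever step $T$ occurs.
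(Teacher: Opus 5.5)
Your proposal is correct and follows the paper's route: the paper's one-line justification reuses the argument of the preceding asymptotic theorem with Theorem~\ref{thm:FSCLF} in place of Theorem~\ref{thm:SCLF}, followed by a reference controller that reaches $\mathcal{S}_g$ in finite time from a localized belief, which is exactly your two-phase decomposition. You are also right that the reference hypothesis has to be read as preserving $\mathcal{R}_{\varepsilon}(\bb)\le 0$ during the goal-reaching phase; the paper assumes this only implicitly, so your handoff discussion and the geometric-trials fallback (which needs an extra assumption) sharpen the argument rather than repair a flaw in it.
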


This result directly follows from the previous theorem with a reference controller that can drive the state with localization error $\varepsilon$ to the goal set in finite time.

\subsection{Learning a Belief CLF}
\label{sec:learning}
While the previous section has introduced the necessary requirements and their implications, finding a valid belief CLF is notoriously difficult. This is for several reasons: 1) the belief space is of very large dimension, 2) the belief dynamics are highly nonlinear and stochastic and 3) the belief space is unintuitive which complicates the design of handcrafted belief CLFs. Recent literature has shown that certificates such as Lyapunov or Barrier functions can be learned from data. The majority of proposed methods for learning certificates are based on sampling data points from the state space and minimizing a loss that violates the Lyapunov conditions at all data points. While this has proven effective in relatively small state spaces, it is not straightforward to apply these to high dimensional belief spaces. 

Reinforcement learning (RL) offers an appealing alternative. Unlike supervised certificate learning methods that require evaluating Lyapunov conditions across many points in the state space, RL methods learn value functions through trajectories that visit only a small subset of the space. This makes them well suited for high-dimensional systems such as belief dynamics, where exhaustive sampling is impossible.

This motivates a connection between value functions arising in RL and stochastic CLFs. Both quantify progress toward a desirable outcome: a Lyapunov function measures progress toward a goal set by decreasing along system trajectories, while a value function measures expected cumulative reward under a learned policy. The structural similarity between these two notions suggests that an optimal RL value function may itself satisfy the conditions of a stochastic CLF.

To make this connection explicit, consider the standard infinite-horizon discounted RL problem
\begin{equation}
        \begin{aligned}
        V^*(\xb) = ~&\max_{\pi \in \Pi}~ \mathbb{E}\left\{\sum_{k=0}^{\infty} \gamma^k r(\xb_k, \ub_k)\right\}\\
        &\text{s.t.} \quad \xb_{k+1} = \bm{F}(\xb_k, \ub_k, \bm{w}_k),\hspace{0.3cm}\bm{w}_k\sim p(\bm{w})
        \end{aligned}
    \label{eq:RL}
\end{equation}
where we seek to find the optimal policy $\pi^*:\mathcal{X}\mapsto \mathcal{U}$ which maximizes the sum of discounted rewards $r(\cdot)$ given a discount factor $\gamma$.

It is well known that the optimal policy satisfies the stochastic Bellman equation
\begin{align}
    V^*\ofx &= \underset{\ub}{\mathrm{max}}~ \mathbb{E}_w\left\{r(\xb, \ub) + \gamma V^*(\bm{F}(\xb, \ub, \bm{w})\right\} 
\end{align}
where $V^*(\xb)$ denotes the optimal value function. In the following we show the relation between $V^*$ and a stochastic CLF $W$. To the best of our knowledge, this specific result has not been reported in the literature.
\begin{remark}
    Note that we use a general state $\xb$ in our theoretical analysis which aligns with the RL literature. However, the state $\xb$ can also be chosen as the belief state $\bb$.
\end{remark}
\begin{theorem}
\label{thm:BCLF_value}
Let $\gamma \in (0,1)$ and suppose the goal set $\mathcal{S}_g \subset \mathcal{X}$ is absorbing and yields zero reward. 
Assume that outside $\mathcal{G}$, the one-step reward satisfies $r(\xb,\ub) \le R_{\max} < 0$. 
Let $V^*\ofx$ denote the optimal discounted value function of~\eqref{eq:RL} and define $W(x) := -V^*(x)$. 
Then $W(x)$ is a local stochastic Control Lyapunov Function on the set
\begin{align*}
\mathcal{D} = \{\, \xb \notin \mathcal{S}_g \mid 0 < W\ofx \le W_{\max} \,\}
\end{align*}
for any constant 
\begin{align*}
c \in [c_{\min}, 1), \qquad c_{\min} = \frac{1}{\gamma}\!\left(1+\frac{R_{\max}}{W_{\max}}\right) < 1,
\end{align*}
provided $W_{\max} < \tfrac{|R_{\max}|}{1 - \gamma}$. 
\end{theorem}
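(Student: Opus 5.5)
The plan is to work directly from the stochastic Bellman equation for $V^*$, rewritten in terms of $W=-V^*$, and then check the three conditions of Def.~\ref{def:SCLF} one at a time on $\mathcal{D}$. I assume throughout that the maximum in the Bellman equation is attained at some $\ub^*(\xb)\in\mathcal{U}$, for example because $\mathcal{U}$ is compact and the expectation is upper semicontinuous. This optimal input is the $\ub$ whose existence Def.~\ref{def:SCLF} requires.

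\textbf{Sign conditions.} Since $\mathcal{S}_g$ is absorbing and has zero reward, every trajectory starting in $\mathcal{S}_g$ collects zero return. Hence $V^*(\xb)=0$ there, and so $W(\xb)=0\le 0$. For $\xb\notin\mathcal{S}_g$, under any policy the first reward is at most $R_{\max}<0$ and every later reward is nonpositive. Therefore $V^*(\xb)\le R_{\max}<0$, which gives
\begin{align*}
W(\xb)\ge |R_{\max}|>0 .
\end{align*}
This lower bound is useful later: $\mathcal{D}$ is nonempty only if $W_{\max}\ge |R_{\max}|$, and in that case $c_{\min}\ge 0$.

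\textbf{Decrease condition.} Evaluating the Bellman equation at $\ub^*$ and multiplying by $-1$ gives
\begin{align*}
W(\xb) = -\mathbb{E}_w\{r(\xb,\ub^*)\} + \gamma\,\mathbb{E}_w\{W(\bm{F}(\xb,\ub^*,\bm{w}))\}.
\end{align*}
Rearranging and using $r\le R_{\max}$ outside $\mathcal{S}_g$,
\begin{align*}
\mathbb{E}_w\{W(\bm{F}(\xb,\ub^*,\bm{w}))\} = \frac{1}{\gamma}\bigl(W(\xb)+r(\xb,\ub^*)\bigr) \le \frac{1}{\gamma}\Bigl(1+\frac{R_{\max}}{W(\xb)}\Bigr)W(\xb).
\end{align*}
On $\mathcal{D}$ we have $0<W(\xb)\le W_{\max}$ and $R_{\max}<0$, so $R_{\max}/W(\xb)\le R_{\max}/W_{\max}$. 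This yields
\begin{align*}
\mathbb{E}\{W(\bm{F}(\xb,\ub^*,\bm{w}))\}\le c_{\min}W(\xb)\le cW(\xb)
\end{align*}
for every $c\in[c_{\min},1)$, where the last step uses $W(\xb)>0$.

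\textbf{Range of $c$.} The inequality $c_{\min}<1$ is equivalent to $1+R_{\max}/W_{\max}<\gamma$, that is, to $|R_{\max}|/W_{\max}>1-\gamma$. This is exactly the hypothesis $W_{\max}<|R_{\max}|/(1-\gamma)$. Combined with $c_{\min}\ge0$ from the sign step, the admissible interval for $c$ is a nonempty subinterval of $[0,1)$. If a strictly positive $c$ is needed to match Def.~\ref{def:SCLF}, any $c\in(0,1)$ with $c\ge c_{\min}$ can be taken.

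\textbf{Expected obstacle.} The algebra is straightforward; the delicate point is the word ``local''. The decrease inequality is proved only at points of $\mathcal{D}$, and nothing prevents the successor state from leaving the sublevel set $\{W\le W_{\max}\}$ with positive probability. I will therefore state the conclusion as a pointwise certificate on $\mathcal{D}$: the conditions of Def.~\ref{def:SCLF} hold for all $\xb\in\mathcal{D}$. I will not claim forward invariance of $\mathcal{D}$, and I will note that the global result of Theorem~\ref{thm:SCLF} applies only if the closed loop remains in $\mathcal{D}$, for instance when $W_{\max}$ bounds $W$ over the region of interest. A secondary technical point is existence and measurability of $\ub^*$, which I will handle by the attainment assumption stated at the start.
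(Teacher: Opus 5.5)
Your proposal is correct and follows the paper's proof essentially step for step. It has the same three ingredients: the sign conditions from the absorbing zero-reward goal and strictly negative rewards elsewhere; the Bellman equation at the optimal input, rearranged to $\mathbb{E}_w\{W(\bm{F}(\xb,\ub^*,\bm{w}))\} = \frac{1}{\gamma}\bigl(W(\xb) + r(\xb,\ub^*)\bigr)$; and the bound $R_{\max}/W(\xb) \le R_{\max}/W_{\max}$ on $\mathcal{D}$. Your extra remarks are sound refinements of points the paper leaves implicit: the lower bound $W \ge |R_{\max}|$ off the goal, which gives $c_{\min}\ge 0$; the explicit attainment assumption for $\ub^*$; and the caveat that $\mathcal{D}$ is not shown to be forward invariant, so the certificate holds only pointwise on $\mathcal{D}$.
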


\begin{proof}
    See Sec. \ref{sec:proof_BCLF}
\end{proof}

\begin{figure}
    \centering
    \includegraphics[]{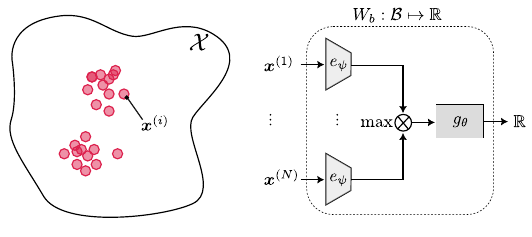}
    \vspace{-0.4cm}
    \caption{Visualization of the network architecture used to learn a belief control lyapunov function. The encoder and MLP are denoted by $e_{\psi}$ and $g_{\theta}$ with learnable parameters $\psi$ and $\theta$, respectively.}
    \vspace{-0.4cm}
    \label{fig:CLFArch}
\end{figure}
\begin{theorem}
\label{thm:FBCLF_value}
    Under the same conditions of Theorem~\ref{thm:BCLF_value}, the function $W\ofx = -V^*\ofx$ is a local fixed-time stochastic CLF on the set
    \begin{align}
        \mathcal{D} = \{\xb \not \in \mathcal{S}_g\mid 0 < W\ofx \leq W_{\max}\}
    \end{align}
    for $\eta \in (0, |R_{\max}|]$ provided that $W_{\max} < \tfrac{|R_{\max}| - \gamma \eta}{1 - \gamma}$.
\end{theorem}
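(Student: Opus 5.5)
The argument runs parallel to Theorem~\ref{thm:BCLF_value}: it combines the Bellman equation at the optimal action with the sign conditions on the reward, and then turns the multiplicative decrease into the additive condition \eqref{eq:SCLF2} of Definition~\ref{def:FSCLF}.

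\textbf{Step 1 (sign conditions).} Because $\mathcal{S}_g$ is absorbing with zero reward, $V^*(\xb)=0$ on $\mathcal{S}_g$. Hence $W(\xb)=0\le 0$ there. Outside $\mathcal{S}_g$, every reward satisfies $r\le R_{\max}<0$, and the first reward is collected before the goal can be entered. So $V^*(\xb)\le R_{\max}<0$, which gives $W(\xb)\ge |R_{\max}|>0$. The first two conditions of Definition~\ref{def:FSCLF} therefore hold on $\mathcal{D}$. Note also that $\mathcal{D}$ is nonempty only if $W_{\max}\ge|R_{\max}|$.

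\textbf{Step 2 (one-step decrease).} Fix $\xb\in\mathcal{D}$ and take $\ub^*=\pi^*(\xb)$, the maximizer in the Bellman equation. Then
\begin{align*}
W(\xb) = -r(\xb,\ub^*) + \gamma\,\mathbb{E}\{W(\bm{F}(\xb,\ub^*,\bm{w}))\} \ge |R_{\max}| + \gamma\,\mathbb{E}\{W(\xb^+)\},
\end{align*}
so $\mathbb{E}\{W(\xb^+)\}\le (W(\xb)-|R_{\max}|)/\gamma$. It follows that
\begin{align*}
\Delta W \le \frac{W(\xb)-|R_{\max}|}{\gamma} - W(\xb) = \frac{(1-\gamma)W(\xb) - |R_{\max}|}{\gamma}.
\end{align*}

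\textbf{Step 3 (matching $-\min(W,\eta)$).} It suffices to show $\Delta W\le -\eta$, since $-\eta\le-\min(W,\eta)$. This requires
\begin{align*}
(1-\gamma)W(\xb) - |R_{\max}| \le -\gamma\eta \iff W(\xb)\le \frac{|R_{\max}|-\gamma\eta}{1-\gamma}.
\end{align*}
This holds on $\mathcal{D}$ because $W(\xb)\le W_{\max}<(|R_{\max}|-\gamma\eta)/(1-\gamma)$. The requirement $\eta\le|R_{\max}|$ keeps this bound positive. Indeed $(|R_{\max}|-\gamma\eta)/(1-\gamma)\ge|R_{\max}|$ exactly when $\eta\le|R_{\max}|$, so the admissible range of $W_{\max}$ is compatible with the lower bound $W\ge|R_{\max}|$ from Step 1.

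\textbf{Main obstacle.} The difficult part is the word ``local''. The decrease only holds on $\mathcal{D}$, and Theorem~\ref{thm:FSCLF} assumes global validity. I would state the result as a certificate on $\mathcal{D}$ only. One option is to argue that trajectories starting in $\mathcal{D}$ are handled by a stopped-process/supermartingale version of Theorem~\ref{thm:FSCLF}, because the expected value of $W$ decreases. Alternatively, I would simply note that, as with Theorem~\ref{thm:BCLF_value}, the claim concerns only the pointwise conditions of Definition~\ref{def:FSCLF} on $\mathcal{D}$. I would also remark that Step 2 uses exact optimality. The existence of a maximizing $\ub^*$ should be assumed, or replaced by an $\epsilon$-optimal action with a correspondingly tightened bound.
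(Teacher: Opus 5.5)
Your proof is correct and follows the paper's route: the Bellman equation at the optimal action gives $\Delta W \le \frac{1}{\gamma}\bigl((1-\gamma)W(\xb) - |R_{\max}|\bigr)$, and monotonicity in $W$ together with $W \le W_{\max}$ then yields the required decrease. The only difference is in organization. The paper splits into the cases $0<W\le\eta$ and $\eta\le W\le W_{\max}$, whereas you use the single uniform bound $\Delta W\le-\eta$, which already implies $\Delta W\le-\min(W,\eta)$; your observation $W\ge|R_{\max}|\ge\eta$ off the goal set also shows that the paper's first case is essentially vacuous.
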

\begin{proof}
    See Sec. \ref{sec:proof_FBCLF}
\end{proof}

These results can be readily extended to algorithms where an optimal action-value function $Q^*(\xb, \ub)$ is available. In this setting, it follows directly from Thm.~\ref{thm:BCLF_value} that $W\ofx = - \underset{\ub}{\mathrm{max}}~Q^*(\xb, \ub)$ is a valid stochastic CLF and the decrease condition can be checked via
\begin{align*}
    \mathbb{E}\{W(\bm{F}(\xb, \ub_k, \bm{w})\} = \frac{1}{\gamma} \left(r(\xb, \ub_k)- Q^*(\xb, \ub_k)\right) \overset{!}{\leq} c W(\xb)
\end{align*}
from which we can directly check if a control input $\ub_k$ satisfies the expected decrease condition or not, without propagating the dynamics.

The theoretical results presented here are central to our method because they provide a principled mechanism for constructing stochastic CLFs for goal-reaching problems. We emphasize, however, that the guarantees hold only for the true optimal value function, which is generally difficult to obtain exactly. In practice, we must rely on approximate value functions, and many off-the-shelf RL algorithms can be used to compute such approximations.

What remains is to show how we can formulate this RL problem on the belief-space model introduced in Eq.~\eqref{eq:discretebeliefdyn}. A direct application of RL is not straightforward because the particle-filter belief representation $p(\xb) \approx \tfrac{1}{N}\sum_{i=1}^N \delta(\xb - \xb^{(i)})$ is invariant to permutations of the particles. Any reordering of the stacked particle vector $\bb = [\xb^{(1)},\dots,\xb^{(N)}]$ represents the same belief distribution, but would be interpreted as a different state by a standard neural network. This permutation symmetry leads to artificial ambiguity in the input representation.

To address this, we adopt a permutation-invariant encoder inspired by architectures for point-cloud processing \cite{qi2017pointnet}. Each particle is first mapped through a shared encoder $e_{\psi}:\mathcal{X}\to\mathcal{O}$ to obtain latent vectors $o^{(i)} = e_{\psi}(\xb^{(i)})$, and we aggregate them with the elementwise maximum
\begin{align}
    \bar{o} = \max_{i=1,\dots,N} \; o^{(i)},
\end{align}
which yields a belief embedding that is invariant to particle ordering. This compact latent representation is then passed through a network $g_{\theta}:\bar{\mathcal{O}}\to\mathbb{R}$ that parameterizes the stochastic CLF. The overall network architecture is illustrated in Fig. \ref{fig:CLFArch}.

To learn this function, we formulate an RL problem over the belief dynamics and choose the reward
\begin{align}
\label{eq:reward}
r(\bb,\ub) = -1 - \mathcal{R}_{\varepsilon}(\bb),
\end{align}
which encourages the agent to minimize the uncertainty measure $\mathcal{R}_{\varepsilon}(\bb)$ and therefore drives the belief quickly toward an informative region. Training the network parameters $(\psi,\theta)$ with off-the-shelf RL algorithms yields an approximate value function whose negative acts as a learned belief-space CLF, as motivated by Theorems~\ref{thm:BCLF_value} and~\ref{thm:FBCLF_value}. Note that if the reach objective changes, we do not need to retrain the BCLF. We only need to retrain if the environment changes.

\section{Belief Space Safety Filter}
In this section, we address the problem of guaranteeing probabilistic safety on the state by designing a safety filter in belief space that corrects the output from the belief CLF if necessary. Since safety is crucial and needs to be ensured at a high frequency, we design our belief space safety filter in continuous time for the stochastic hybrid system in Eq. \eqref{eq:hybridsys}. 

As stated in Problem \ref{prob1}, we seek to provide a probabilistic guarantee that holds over a mission horizon of time $T$, i.e. $\mathrm{Pr}\left[\xb_t \notin \mathcal{S}_a, \forall t\in[0, T]\right] \geq 1 - \delta_a$ which is not to be confused with a probabilistic guarantee pointwise in time $\mathrm{Pr}\left[\xb_t \notin \mathcal{S}_a\right] \geq 1 - \delta_a, \forall t\in [0, T]$. For the latter one, the overall probability of a trajectory being unsafe decreases over time and converges to zero as $t$ increases. To address this problem, we extend recently proposed Belief Control Barrier Functions (BCBFs) which provide risk-aware guarantees pointwise in time, to our desired setting of horizon guarantees.

We first assume that the avoid set can be described as the subzero levelset of a function $h_x$, i.e.
\begin{align}
    \mathcal{S}_a = \left\{\xb \in \mathcal{X} \mid h_x \ofx < 0\right\}
\end{align}
where we assume that $h_x$ is twice continuously differentiable. In the case of a fully observable state, stochastic RCBFs have been proposed to ensure set invariance almost surely as summarized in Sec. \ref{sec:SCBFs}.
\begin{assumption}
    We assume the function $B = \nicefrac{1}{h_x}: \mathcal{X} \mapsto \mathbb{R}$ is a stochastic RCBF according to Def. \ref{def:RCBF} for the fully observable system where the state $\xb$ is known.
\end{assumption}
First, we rewrite the probability of the state being safe within a time interval $I = [t_k, t_{k+1})$ as a single probabilistic statement on the entire past trajectory of the state, i.e.
\begin{align}
    \mathrm{Pr}\left[\xb_t \notin \mathcal{S}_a,~\forall t\in I \right] &= 1 - \delta_a\label{eq:jointCC}\\
    \Leftrightarrow \mathrm{Pr}\left[\underset{\tau \in I}{\mathrm{min}}~ h_x\ofxtau \geq 0\right] &= 1 - \delta_a.\label{eq:min_formulation}
\end{align}
This is true because, if the state is safe within an interval $I$, then its minimum over time will also be safe within the interval $I$. The advantage of using the $\mathrm{min}$ formulation is that we can evaluate a horizon probabilistic constraint by checking a single equation in Eq.~\eqref{eq:min_formulation}.
% introduce a history state $\xi_t = \mathrm{min}(h_x\ofxt, \xi_t)$
\begin{algorithm}[t]
\caption{Horizon BCBF Safety Filter}
\label{alg:conformal_qp}
\begin{algorithmic}[1]
\State \textbf{Input:}  Current particles and their history state $\{(\xb^{(i)}_{t}, \xi^{(i)})\}_{i=1}^{N}$, BCLF control input $\bm{u}^{\mathrm{IG}}$, risk level $\bar{\delta}_a$
\State \textbf{Output:}  Safe control input $\bm{u}$
\State Compute non-conformity scores: 
\[
\rho^{(i)} = -\xi^{(i)}, \quad \forall i=1,\dots,N
\]
\State Sort scores: $\rho^{(1)} \leq \rho^{(2)} \leq \dots \leq \rho^{(N)} \leq \rho^{(N+1)}$
\State Compute threshold index: $p = \lceil (N+1)(1-\delta) \rceil$
\State Select top-$p$ particles: $\xb^{(k)}, \forall k=1,\dots,p$
\State Solve QP in Eq.~\eqref{eq:ConformalQP} with top-$p$ constraints\\
\Return $\ub^*$ from QP solution
\end{algorithmic}
\end{algorithm}

Since, by Assumption~1, we obtain \(N\) i.i.d.\ samples from the belief at time \(t_k\) after the resampling step, propagating each sample forward using an independent realization of the process noise $\text{d}\bm{W}$ yields \(N\) i.i.d.\ continuous-time particle trajectories. Each trajectory
\[
\mathcal{T}_x^{(i)} = \{\, \xb_t^{(i)} \mid t \in [t_k, t_{k+1}) \,\}
\]
is therefore an independent draw from the distribution over state trajectories on the interval \([t_k, t_{k+1})\). This distribution is determined jointly by the initial-state distribution \(p(\xb_{t_k})\) and the law of the noise process \(\{ \bm{W}_\tau \}_{\tau \in [t_k, t_{k+1})}\) through the system dynamics. In other words, the randomness in \(\mathcal{T}_x^{(i)}\) arises solely from the sampled initial condition and the independent noise realization used during propagation, and thus the resulting particle trajectories remain i.i.d.

Therefore, we can use the $N$ particles obtained at time $t \rightarrow t_{k+1}$ right before a measurement update to perform conformal prediction on
the non-conformity score
\begin{align}
    \rho_t = - \underset{\xb \in \mathcal{T}_x}{\mathrm{min}} ~h_x\ofx\label{eq:nonconformityBCBF}
\end{align}
Then, using Lemma 1, we can obtain an upper bound $C$ such that  
\begin{align}
    \mathrm{Pr}\left[\rho(\xb_t) \leq C\right] \geq 1 - \bar{\delta}_a
\end{align}
for some desired probability threshold $\bar{\delta}_a$. If this upper bound would satisfy $C \leq 0$, it follows that Eq.~\eqref{eq:min_formulation} holds which, in turn, would imply that the desired horizon guarantee in Eq.~\eqref{eq:jointCC} holds. Thus, our goal is to find control inputs such that at any time $t \in I$, the conformal upper bound satisfies $C\leq 0$.

To synthesize controls that achieve the desired horizon guarantee, we introduce a history state $\xi$ that keeps track of the running minimum for each particle, i.e.
\begin{align}
    \xi_t^{(i)} = \mathrm{min}(\xi_t, h_x (\xb_t^{(i)}))
\end{align}
which is initialized with $\xi_{t_k} = h_x (\xb_{t_k})$. At any time $t\in I$, we use the history state $\xi$ to calculate the nonconformity score in Eq.~\eqref{eq:nonconformityBCBF} as $\rho_t^{(i)} = - \xi_t^{(i)}$ for all particles $i=1,\dots, N$. We sort the scores and append $\rho^{(N+1)} = \infty$ and perform CP according to Lemma 1, to obtain the upper bound $C$ and all top-$p$ particles for which $\rho_t^{(i)} \leq C = \rho_t^{(p)}$ with $p = \lceil(N+1)(1-\bar{\delta}_a)\rceil$.

Next, we synthesize controls by solving a QP that minimizes the deviation from information gathering control input $\ub^{\mathrm{IG}}$ generated by the BCLF while ensuring that each of the top-$p$ particles satisfies a per particle stochastic RCBF, reading
\begin{equation}
    \begin{aligned}
        \min_{\bm{u} \in \mathcal{U}} \quad & \lVert\bm{u} - \ub^{\mathrm{IG}}\rVert_2\\
        \textrm{s.t.} \quad & \frac{\partial B}{\partial \bm{x}}\Bigg\vert_{\xb^{(i)}} \left(\bm{f}(\xb^{(i)})) + \bm{g} (\xb^{(i)}) \bm{u}\right) \\
        +\frac{1}{2} &\mathrm{tr}\left[\bm{\sigma}^T \frac{\partial^2 B}{\partial \bm{x}^2}\Bigg\vert_{\xb^{(i)}} \bm{\sigma}\right]
      \leq \alpha_3\left(h_x(\xb^{(i)})\right),  \forall i=1,\dots,p.
    \end{aligned}
    \label{eq:ConformalQP}
    \end{equation}
Note that $p$ is a function of the current belief and the induced non-conformity scores.
We summarize the overall controller in Alg.~\ref{alg:conformal_qp}. Following this control strategy, we get the following guarantee.
\begin{theorem}
\label{thm:BCBFQP}
    Given the belief system with dynamics in Eq.~\eqref{eq:BeliefDynamics}, if the QP in Eq. \eqref{eq:ConformalQP} is always feasible on an interval $I=[t_k, t_{k+1}]$, then $\mathrm{Pr}[\xb \notin \mathcal{S}_a, \forall t \in I] \geq 1 - \bar{\delta}_a$.
\end{theorem}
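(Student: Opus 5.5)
The argument combines Theorem~\ref{thm:SCBF} (almost-sure invariance for each particle under the RCBF condition) with Lemma~\ref{lemma:CP} (a conformal bound on the running-minimum score). Concretely, we show that feasibility of \eqref{eq:ConformalQP} on $I$ keeps every top-$p$ particle trajectory safe, so the conformal threshold $C=\rho^{(p)}$ stays nonpositive. Since $C\le 0$ implies \eqref{eq:min_formulation}, this yields \eqref{eq:jointCC}.

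\textbf{Step 1 (i.i.d.\ trajectory scores).} At $t_k$, Assumption~\ref{ass:iid} gives $N$ i.i.d.\ samples $\xb^{(i)}_{t_k}$ from the true belief. The true state $\xb_{t_k}$ is exchangeable with them. Each particle and the true state are driven by the same input $\ub(t)$ through the SDE \eqref{eq:SDE}, with independent Brownian motions. Conditioned on the input signal, the trajectories $\mathcal{T}^{(i)}_x$ and the true trajectory $\mathcal{T}_x$ are therefore i.i.d., as argued in the paragraph preceding \eqref{eq:nonconformityBCBF}. The scores $\rho^{(i)}=-\min_{\tau\in I}h_x(\xb^{(i)}_\tau)=-\xi^{(i)}_{t_{k+1}}$ and the true score $\rho$ are then i.i.d. Lemma~\ref{lemma:CP} gives $\Pr[\rho\le \rho^{(p)}]\ge 1-\bar\delta_a$ with $p=\lceil (N+1)(1-\bar\delta_a)\rceil$.

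\textbf{Step 2 (the top-$p$ particles stay safe).} At $t_k$ every particle with $\xi^{(i)}\ge 0$ lies in $\mathcal{C}_x$. While particle $i$ is among the top-$p$ set, the QP constraint is exactly the RCBF condition of Def.~\ref{def:RCBF} for $B=1/h_x$ evaluated along $\xb^{(i)}$. By the standing assumption and Theorem~\ref{thm:SCBF}, $h_x(\xb^{(i)}_t)\ge 0$ almost surely, so $\xi^{(i)}$ remains nonnegative for those particles. We then argue that the index set achieving $\rho^{(i)}\le\rho^{(p)}$ can only be exited by particles whose score is not increasing past $0$. A particle constrained at time $t$ cannot drive its score above $0$. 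A particle that is not constrained has $\rho^{(i)}\ge\rho^{(p)}$ already, so it does not lower the $p$-th order statistic below its value. Hence the $p$-th smallest score satisfies $\rho^{(p)}_t\le 0$ for all $t\in I$, in particular at $t\to t_{k+1}$. This step requires initial feasibility, namely that at least $p$ particles start with $h_x\ge 0$, which we take to be implied by feasibility of the QP (otherwise the RCBF with $B=1/h_x$ is undefined).

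\textbf{Step 3 (conclude).} Combining the two steps gives $\Pr[\min_{\tau\in I}h_x(\xb_\tau)\ge 0]\ge\Pr[\rho\le\rho^{(p)}]\ge 1-\bar\delta_a$, which is the claim via \eqref{eq:min_formulation}.

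\textbf{Main obstacle.} The delicate point is Step 1. The input $\ub$ depends on the particles through the QP, so the true-state trajectory and the particle trajectories are coupled through feedback. Exchangeability must still hold, since $C$ is computed from the same particles that determine the control. I would first try to argue exchangeability directly. The control is a symmetric function of the particle set and does not depend on the true state, so the joint law of $(\mathcal{T}_x,\mathcal{T}^{(1)},\dots,\mathcal{T}^{(N)})$ is invariant under swapping the true state with any particle only if the controller were symmetric in all $N+1$ trajectories. That symmetry fails, which is the gap. The fallback is to condition on the realized input signal and treat it as fixed (open-loop on $I$). This is essentially the modeling stance the paper takes, and the resulting guarantee should be stated in that conditional sense.

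A second subtlety is the switching of the top-$p$ set in Step 2. There I would use the monotonicity of $\xi$, which is nonincreasing in time, to show that the $p$-th order statistic of $-\xi$ stays $\le 0$.
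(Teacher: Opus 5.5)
Your skeleton matches the paper's proof. Feasibility of \eqref{eq:ConformalQP} imposes the RCBF condition on the top-$p$ particles, Theorem~\ref{thm:SCBF} keeps them safe so that $C=\rho^{(p)}\le 0$, and Lemma~\ref{lemma:CP} converts this into \eqref{eq:min_formulation}. Your Step~2 is more careful than the paper, which simply asserts that ``the $p$ best particles are always safe'' without addressing that the top-$p$ set switches over time. The clean invariant is that $\{i:\xi^{(i)}_t\ge 0\}$ keeps at least $p$ elements. Its top-$p$ members are constrained and cannot leave it. Unconstrained particles cannot \emph{raise} $\rho^{(p)}$; your ``lower'' is the wrong direction, since scores are nondecreasing in time. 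Note also that $C(\bb_{t_k})\le 0$ at the start of $I$ is an extra hypothesis, not a consequence of feasibility. Where $h_x<0$, $B=1/h_x$ is still defined, just no longer a barrier. The paper uses this hypothesis implicitly too.

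The real gap is Step~1, and your fallback does not close it. Conditioning on the realized input $\ub(\cdot)$ on $I$ does not make the $N+1$ trajectories i.i.d. The true state's initial condition and Brownian path are independent of $\ub$, so given $\ub$ it follows the open-loop law. But $\ub$ is computed by the QP from the particles' states and Brownian paths. Conditioning on it therefore changes the law of the particle trajectories, typically in the optimistic direction, because the controller reacts to exactly the particles that later serve as calibration data. Hence $\rho$ is not exchangeable with $\rho^{(1)},\dots,\rho^{(N)}$, and Lemma~\ref{lemma:CP} cannot be invoked, conditionally or unconditionally.

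The paper's proof has the same hole. The paragraph before \eqref{eq:nonconformityBCBF} declares the particle trajectories i.i.d.\ draws of the state-trajectory law and ignores the feedback. In both arguments, exchangeability of the closed-loop scores must be assumed, as an extension of Assumption~\ref{ass:iid}, rather than derived. A rigorous variant would calibrate on particles that never enter the QP (split conformal). Given $\ub$, those particles and the true state are i.i.d., but you lose the automatic $C\le 0$ from Step~2 and would have to check it online.
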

\begin{proof}
    If the QP is always feasible, then all constraints are satisfied and we know by Thm. \ref{thm:SCBF} that the $p$ best particles are always safe, i.e. $h_x(\xb_t^{(k)}) \geq 0$ for all $k = 1,\dots, p$. Consequently the minimum over their trajectory is also positive and $\rho_t^{(k)} \leq 0$ for all $k=1,\dots,p$. In particular, by Lemma \ref{lemma:CP}, it follows that at time $t=t_{k+1}$ the conformal upper bound satisfies $C = \rho_t^{(p)} \leq 0$ from which it follows that
    \begin{align}
        \mathrm{Pr}\left[\underset{\tau \in I}{\mathrm{min}}~ h_x\ofxtau \geq 0\right] &= 1 - \bar{\delta}_a
    \end{align}
    which concludes that the continuous state evolution over the time interval $I$ is safe with desired probability $1-\bar{\delta}_a$.
    % , so in the end the non-conformity score of the true state is smaller than 0 with desired probability, which converts to a horizon guarantee on safety on the interval $I$. 
\end{proof}
This result shows that we can guarantee probabilistic safety within a continuous time interval between measurements with desired probability. Next we show, how this safety guarantee can be extended to longer time intervals. 
\begin{assumption}
\label{ass:transition}
    The belief $\bb$ does not enter the set of unsafe belief states under a stochastic discrete update, i.e.
    \begin{align}
        C(\bb_{t_k^-}) \leq 0 \implies C(\bb_{t_k^+}) \leq 0
    \end{align}
    where $C$ is the conformal upper bound and $t_k^-$ before a measurement update is infinitesimal smaller than $t_k^+$.
\end{assumption}
This assumption states that if the belief is probabilistically safe before a measurement, it will also remain probabilistically safe after a measurement. One example of a discrete transition not satisfying assumption~\ref{ass:transition} is when unsafe particles would get resampled more than $p$ times such that suddenly most probability mass is inside the unsafe set. However, in simulations and experiments we have not observed this case.
To bound the probability of violating safety constraints over the time horizon $T$ we first observe that we can find a lower bound on this probability through the following proposition. 
\begin{proposition}
    \label{prop1}
    Under the same conditions as in Theorem~\ref{thm:BCBFQP}, for a finite time interval $[0, T] = \cup_{k=1}^M I_k$, we have that
    \begin{align}
        \mathrm{Pr} [\xb_t \notin \mathcal{S}_a \forall t \in [0, T]] \geq (1 - \bar{\delta}_a)^M.
    \end{align}
    where $M$ is the number of measurements taken in the interval [0, T].
\end{proposition}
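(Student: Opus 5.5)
Write $E_k$ for the event that $\xb_t \notin \mathcal{S}_a$ for all $t \in I_k$. Since $[0,T] = \cup_{k=1}^M I_k$, safety on the whole horizon is the intersection $\cap_{k=1}^M E_k$. Expanding by the chain rule of conditional probability gives
\begin{align*}
\mathrm{Pr}\Big[\bigcap_{k=1}^M E_k\Big] = \prod_{k=1}^M \mathrm{Pr}\Big[E_k \,\Big|\, \bigcap_{j<k} E_j\Big].
\end{align*}
It therefore suffices to lower bound each conditional factor by $1-\bar{\delta}_a$. The product is then at least $(1-\bar{\delta}_a)^M$, which is the claim.

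To bound each factor I will apply Theorem~\ref{thm:BCBFQP} on $I_k$, conditionally on the history up to $t_k$. The measurement history and the safe past $\cap_{j<k}E_j$ are part of this conditioning. At $t_k$ the filter produces a resampled belief $\bb_{t_k^+}$, which by Assumption~\ref{ass:iid} consists of $N$ i.i.d. samples from the posterior of $\xb_{t_k}$ given $\zb_{t_1:t_k}$. I will argue that this posterior already accounts for the safe past: it is the conditional law of the true state given everything observed, and past safety is enforced by the particles kept in the QP. Propagating with independent noise then yields i.i.d. trajectories on $I_k$, as argued before Theorem~\ref{thm:BCBFQP}. Assumption~\ref{ass:transition} guarantees that the conformal bound starts non-positive after the jump, $C(\bb_{t_k^+}) \le 0$, so the filter is well posed at the start of the interval. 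Since the QP is assumed feasible on every $I_k$, Theorem~\ref{thm:BCBFQP} gives $\mathrm{Pr}[E_k \mid \mathcal{F}_{t_k}] \ge 1-\bar{\delta}_a$. Averaging over histories consistent with $\cap_{j<k}E_j$ then gives the conditional factor bound.

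The main obstacle is the conditioning step. Conformal prediction certifies a marginal statement under exchangeability with the calibration particles. What is needed here is a statement conditional on the past intervals having been safe, and that event is not part of the Bayes posterior unless the measurements reveal it. I would handle this by treating the conformal guarantee as holding conditionally on the $\sigma$-algebra at $t_k$, using Assumption~\ref{ass:iid} as stated, so that the true state and the particles are exchangeable given $\mathcal{F}_{t_k}$.

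If that conditional exchangeability is too strong, I would fall back on a direct induction: define $P_k = \mathrm{Pr}[\cap_{j\le k}E_j]$ and show $P_k \ge P_{k-1}(1-\bar{\delta}_a)$. This needs the same conditional bound but makes the dependence explicit. A union bound would give the weaker $1-M\bar{\delta}_a$ without any independence, and I would mention it as a sanity check.
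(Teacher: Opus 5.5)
Your proposal follows essentially the same route as the paper. Both arguments decompose the horizon probability recursively over the intervals $I_k$ by the chain rule, bound each conditional factor by $1-\bar{\delta}_a$ using Theorem~\ref{thm:BCBFQP}, and use Assumption~\ref{ass:transition} to carry the argument across each measurement jump. The conditioning step you flag is exactly the one the paper glosses over: it replaces conditioning on the safe past with conditioning on $\xb_{t_{M-1}} \notin \mathcal{S}_a$ and asserts that this factor equals $1-\bar{\delta}_a$, which is the conditional form of the conformal guarantee that you propose to assume explicitly.
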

\begin{proof}
    See Sec.~\ref{sec:proofprop}
\end{proof}
Note that from this result, the total probability of the state entering the unsafe set over an arbitrary interval can be bounded by adjusting the $\bar{\delta}_a$. That is, if we want $\mathrm{Pr}[\xb_t \notin \mathcal{S}_a, \forall t \in \cup_{k=1}^M I_k]\geq 1- \delta_a$, we can set $\bar{\delta}_a = 1 - \delta_a^{\nicefrac{1}{M}}$.

\subsection{Guarantees on the Combined Controllers}
In summary, we have introduced three controllers: a state-based reference controller, a belief CLF that enforces the desired information gathering behavior, and a belief CBF that acts as a safety filter to provide probabilistic safety guarantees over a finite horizon. The BCLF and BCBF each come with their own guarantees; probabilistic goal reaching and probabilistic safety, respectively. However, their conditions in Eq.~\eqref{eq:LyapControl} and Eq.~\eqref{eq:ConformalQP} may conflict. For example, the control input required by the BCLF may steer the system toward the unsafe set $\mathcal{S}_a$, causing the safety filter to intervene and potentially leaving the belief dynamics trapped in a local minimum.

To mitigate this, we can use the learned BCLF as a monitor of the IG progress. If the BCLF value stagnates, i.e. it becomes non-decreasing, this indicates a conflict between information gathering and safety. In such cases, we can resample a new IG control input that still satisfies the Lyapunov decrease condition. While this mechanism does not provide a formal guarantee that all conflicts will be resolved, one of our ablation studies demonstrates that it can successfully escape local minima in practice.

An alternative approach would be to combine the BCLF and BCBF constraints into a single optimization problem, thereby eliminating the need for the additional conflict-resolution mechanism. We choose not to take this approach for two reasons. First, the BCLF and BCBF do not need to operate at the same control frequency. Second, retaining the simple QP structure of the BCBF is advantageous for running the safety filter at high control rates, while the BCLF controller is solved at a lower rate determined by the measurement updates. This separation preserves computational efficiency without sacrificing the modularity of the overall architecture.

\section{Evaluation}
We evaluate our proposed control architecture for solving reach-avoid POMDPs in simulations as well as hardware experiments.
Specifically, we
\begin{enumerate}
    \item show improved performance in terms of goal reaching and safety over existing MCTS baselines which merge all control objectives in a single tree search in Sec.~\ref{sec:quanteval},
    \item demonstrate that the proposed minimum deviation control in the BCLF results in goal-oriented information gathering behavior which results in shorter path lengths compared to a pure RL approach for information gathering in Sec.~\ref{sec:AblationCoeff},
    \item show how the learned BCLF can be used to resolve conflict between information gathering and safety in Sec.~\ref{sec:conflict} and ablate that it can be reused for a completely new task in Sec.~\ref{sec:ablationReuse},
    \item analyze a learned BCLF and its theoretical properties in a toy example consisting of two particles which allows to visualize the belief space in Sec.~\ref{sec:analysis}, and
    \item validate our control architecture on a space robotics hardware platform in which a robot has to accurately localize itself by using IMU impact detections in Sec.~\ref{sec:hardware}. 
\end{enumerate}

\subsection{Simulation Setup}
We evaluate our method on three different continuous reach-avoid POMDPs illustrated in Fig.~\ref{fig:envs}.

\textbf{Constrained Lightdark:} In this adaption of the original Lightdark problem~\cite{sunberg2018online}, a robot starts from an unknown initial state $x_0\sim \mathcal{U}(-10, 10)$ and has to reach a goal area located at $\mathcal{S}_g =\{x\in \mathbb{R}\mid |x| < 1\}$ while avoiding the unsafe area at $\mathcal{S}_a = \{x\in \mathbb{R}\mid x \geq 10\}$. The robot's continuous dynamics are described by $\text{d}x = u ~\text{d}t + 0.01 ~\text{d}W$ and measurements happen at a frequency of 5Hz with an observation model $z = x + \sigma(x)$ with spatially varying noise. The most accurate measurements can be obtained right at the boundary to the avoid set.

\textbf{Constrained Antenna:} In this extension, a robot has to operate in 2D to reach the goal set $\mathcal{S}_g$ and avoid the unsafe set $\mathcal{S}_a$ as illustrated in Fig.~\ref{fig:envs}. The robot follows single integrator dynamics $\text{d}\xb = \ub ~\text{d}t + 0.04 ~\text{d}\bm{W}$ and the measurement model is given by $z = \lVert \xb - \ell\rVert_2 + \sigma(\xb)$ which measures the distance to the antenna with spatially varying noise. Due to the distance measurement, the belief can be highly non-Gaussian.

\textbf{Constrained Bumper} In this environment, a robot is moving in a 2D map and can only localize itself by bumping into the wall. Thus, the measurement model is a simple binary measurement $z=\{0, 1\}$ where $1$ indicates a bumper measurement. Further, the bumper sensor has a failure probability of $1\%$ meaning that it can provide the wrong sensor reading. The stochastic dynamics are the same as in the Constrained Antenna environment. The reach and avoid regions are illustrated in Fig.~\ref{fig:envs}.
\begin{table*}[t]
    \centering
    \renewcommand{\arraystretch}{1.4}
    \setlength{\tabcolsep}{10pt}
    \caption{Quantitative Evaluation across environments. For the MCTS baselines, $\delta t$ refers to the planning time step.}
    \label{tab:comparison_detailed}
    \begin{tabular}{l|cc|cc|c|c|c|c}
        \toprule
        & \multicolumn{2}{c|}{CPOMCPOW} & \multicolumn{2}{c|}{CPFT-DPW} & $\ub^{\mathrm{ref}}$ & $\ub^{\mathrm{ref}}$+BCBF & $\ub^{\mathrm{ref}}$+BCLF & Ours \\
        \hline \hline
        \multicolumn{9}{l}{\textbf{Constrained Lightdark}} \\
        \quad Reach $\uparrow$ & 0.72 ($\delta t{=}1$) & 0.52 ($\delta t{=}0.1$) & 0.9 ($\delta t{=}1$) & 0.66 ($\delta t{=}0.1$) & 0.51 & 0.51 & 1.0 & 0.99 \\
        \quad Avoid $\uparrow$   & 0.69 ($\delta t{=}1$) & 0.86 ($\delta t{=}0.1$) & 0.35 ($\delta t{=}1$) & 0.79 ($\delta t{=}0.1$) & 1.0 & 1.0 & 0.03 & 1.0 \\
        \quad SR $\uparrow$   & 0.4 ($\delta t{=}1$) & 0.47 ($\delta t{=}0.1$) & 0.35 ($\delta t{=}1$) & 0.56 ($\delta t{=}0.1$) & 0.51 & 0.51 & 0.03 & \textbf{0.99} \\
        \hline
        \multicolumn{9}{l}{\textbf{Constrained Antenna}} \\
        \quad Reach $\uparrow$& \multicolumn{2}{c|}{0.0 ($\delta t{=}0.1$)} & \multicolumn{2}{c|}{0.0 ($\delta t{=}0.1$)} & 0.53 & 0.05 & 0.97 & 0.95 \\
        \quad Avoid $\uparrow$   & \multicolumn{2}{c|}{1.0 ($\delta t{=}0.1$)} & \multicolumn{2}{c|}{0.99 ($\delta t{=}0.1$)} & 0.37 & 0.98 & 0.41 & 0.98 \\
        \quad SR  $\uparrow$ & \multicolumn{2}{c|}{0.0 ($\delta t{=}0.1$)} & \multicolumn{2}{c|}{0.0 ($\delta t{=}0.1$)} & 0.33 & 0.05 & 0.4 & \textbf{0.95} \\
        \bottomrule
    \end{tabular}
\end{table*}

For our initial comparison, we use a finite-time BCLF according to Def.~\ref{def:FTBCLF} and a simple proportional reference controller that would guide the state to the goal region if the state would be perfectly known. To learn a BCLF, we train an RL agent in each environment with the proposed reward structure in Eq.~\eqref{eq:reward} using deep Q-networks (DQNs)~\cite{mnih2015human}. To that end, we discretize the control space and learn a Q function. In Constrained Lightdark the action space is reduced to 7 equidistant actions between -10 and 10 and in the other environments, we discretize the action space into 9 actions which are the move directions in cardinal directions at constant velocity. Note that we only discretize the action space to make the RL problem simpler, the safety filter, e.g., can still apply continuous actions to the robot. For the encoder $e_{\psi}$ we use simple MLP with two hidden layers of dimension 32 and a latent dimension of 8. For the Q function, we use an MLP with three hidden layers of dimension 256 (128 in Constrained Lightdark). The goal sets in belief space are chosen such that at convergence the state will be localized in a ball $\mathbb{B}_{\varepsilon}$ with probability $1-\delta_l=0.99$. The size of this ball is chosen such that it can be fully contained in the goal region.
\begin{figure}
    \centering
    \includegraphics[width=0.48\textwidth]{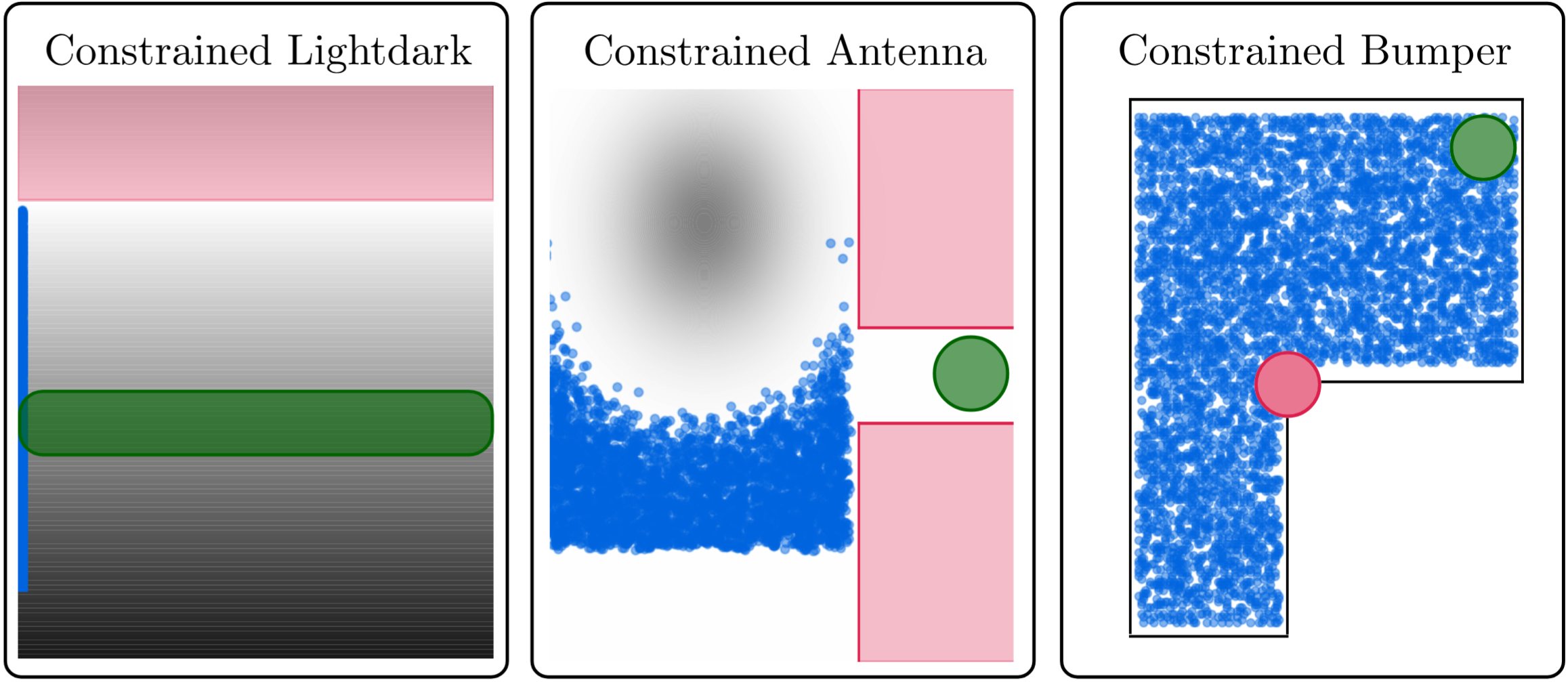}
    \vspace{-0.4cm}
    \caption{Illustration of the three reach-avoid POMDPs considered in simulation. Goal regions $\mathcal{S}_g$ are shown in green, avoid regions $\mathcal{S}_a$ are shown in red and the particle filter belief is shown in blue.}
    \vspace{-0.4cm}
    \label{fig:envs}
\end{figure}
The RL agents are trained with a discount factor of $\gamma = 0.99$ and the BCBF safety filter has a per interval safety probability of $1-\bar{\delta}_a = 0.99$ in Constrained Lightdark and Constrained Antenna. The safety probability in the Bumper environment is set to $1-\bar{\delta}_a=0.9$ because the robot would not move otherwise based on the initial belief. For the BCBF we use a stochastic CBF of the form $h_x(x) = 10 - x$ in Constrained Lightdark environment. In Constrained Antenna, we use a continuously differentiable overapproximation of the unsafe set using smooth $\min$ and $\max$ operators~\cite{molnar2023composing} and in Constrained Bumper, the stochastic CBF is chosen as $h_x = \lVert \xb - \mathcal{O}\rVert_2 - r_{\mathcal{O}}$ which is the distance to the unsafe region.
\begin{figure*}
    \centering
    \includegraphics[width=\textwidth]{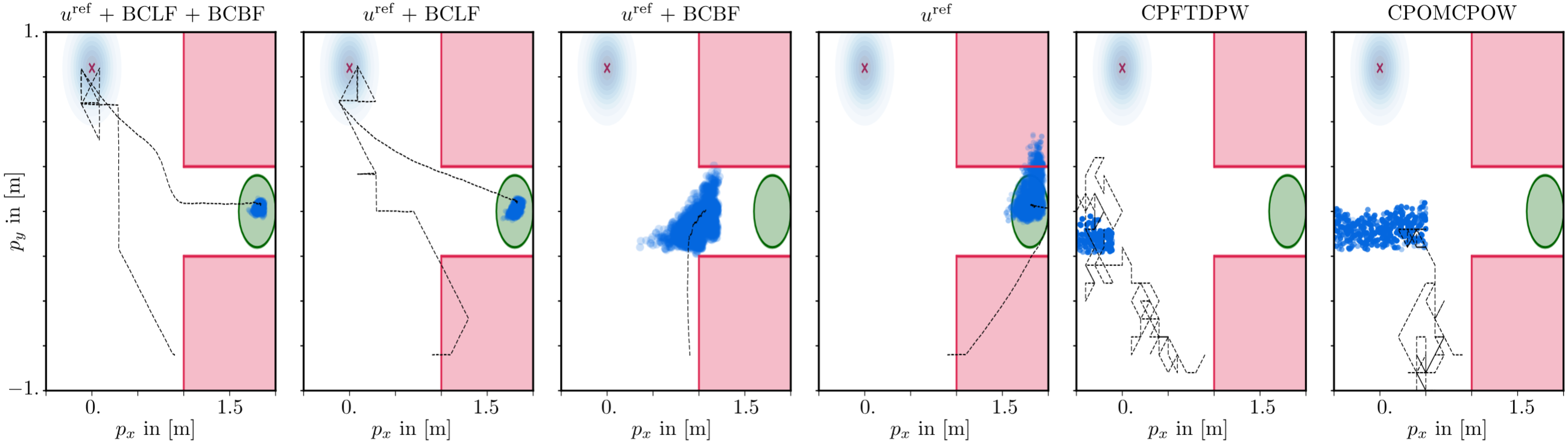}
    \vspace{-0.4cm}
    \caption{Illustration of all baselines in the Constrained Antenna environment in which the antenna location is indicated by a red cross, the goal region is shown in green and the avoid set is shown in red. In this run, only our proposed control architecture is able to reach the goal without entering the set of unsafe states.}
    \vspace{-0.2cm}
    \label{fig:AntennaSim}
\end{figure*}

\subsection{Baselines}
First, we compare against different combinations of our proposed control architecture to isolate the effect of each individual module. Specifically, we compare against the pure reference-based controller that only acts on the mean state ($u^{\text{ref}}$), the reference controller with a BCBF ($u^{\text{ref}}$ + BCBF) and the reference controller with a BCLF but without a safety filter ($u^{\text{ref}}$ + BCLF). Further, we compare against state-of-the-art solvers for constrained POMDPs based on Monte Carlo Tree Search. Namely, we compare against constrained partially observable planning with observation widening (CPOMCPOW) and constrained particle filter trees with double progressive widening (CPFT-DPW)~\cite{jamgochian2023online}. These baselines try to maximize expected rewards subject to expected cost constraints. Since this is different from our original problem formulation, we use the negative distance to the goal set as reward function and the distance to the unsafe set as cost function. We use the same hyper parameters as proposed in~\cite{jamgochian2023online}. We further compare to two different versions of the MCTS baselines in which we change the time step used in rollouts. This allows us to analyze the reach and avoid behavior for different horizon lengths and control frequencies.

\subsection{Quantitative Evaluation}
\label{sec:quanteval}
First, we consider the environments Constrained Lightdark and Constrained Antenna. We run 100 simulations for each environment in which the true initial state is randomized. The quantitative results of the simulation study are summarized in Table~\ref{tab:comparison_detailed}. It can be seen that our method achieves the overall best results when looking at the success rate, i.e. the runs in which the state reaches the goal and stays safe along the entire trajectory. In the Constrained Lightdark environment we can observe that for both MCTS baselines the reach rate increases and the avoid rate decreases if we reduce the time step. This can be attributed to a shorter planning horizon in the case of $\delta t=0.1$ such that information gathering becomes more difficult. On the other hand, if the time step is lower, the belief can be corrected at higher frequencies which results in higher avoid rates. This underlines our claim that different components occuring in POMDP solving do not necessarily all need to run at the same frequency. 

We can observe that all baselines with the proposed BCBF keep the state safe in almost all simulation runs showing that the belief safety filter successfully keeps the majority of particles outside $\mathcal{S}_a$. Figure~\ref{fig:AntennaSim} shows examplary runs for all baselines in the Constrained Antenna environment. It can be observed that for $u^{\text{ref}} + \text{BCBF}$, the belief is kept safe but it is trapped in a local minimum since the belief is not accurate enough to safely reach the goal. It also becomes apperent that the MCTS baselines try to maximize the reward by reducing the distance to the goal in the $p_y$ direction but the belief is not sufficiently accurate to reach the goal without entering the unsafe set resulting in overly conservative behavior. The baseline $u^{\text{ref}} + \text{BCLF}$ efficiently gathers information and reaches the goal but becomes unsafe during the information gathering process. 
Further, we can see that the baselines with a BCLF and a reference controller have a high reach rate since the robot gathers information first before approaching the goal. Combining all modules into one control architecture leads to high reach and avoid rates. Further, we want to highlight that IG and goal reaching are not aligned in these two environments, i.e. in order to gather information we need to move away from the goal. However, since the BCLF wants to reach an accurately localized belief state in finite time, the belief is first localized before moving to the goal.

\subsection{Ablation: Lyapunov Coefficient}
\label{sec:AblationCoeff}
\begin{figure*}
    \centering
    \includegraphics[width=\textwidth]{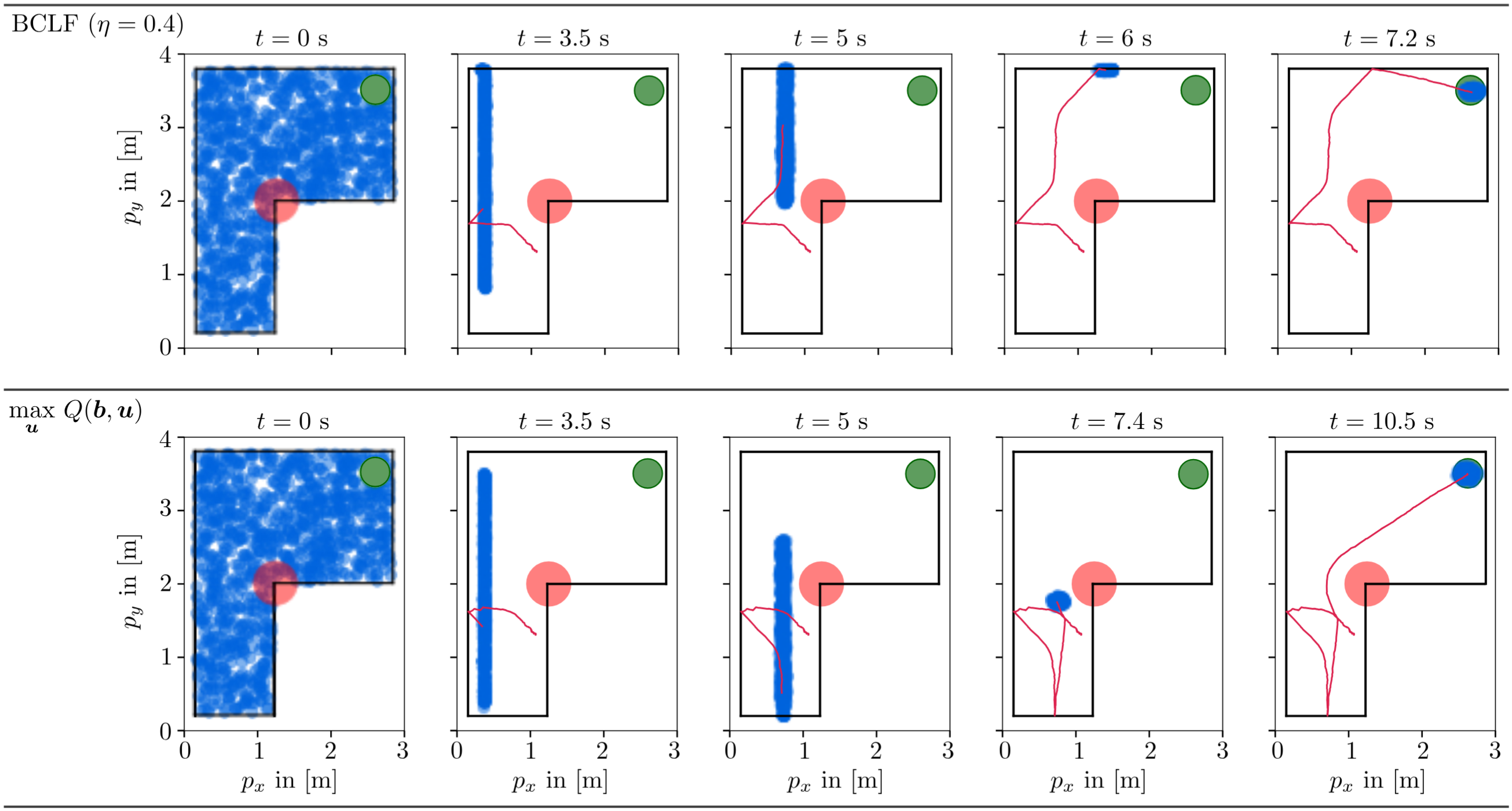}
    \vspace{-0.4cm}
    \caption{Illustration of the proposed control architecture (top) and the switching controller (bottom) in the Constrained Bumper environment. Our method first gathers information by bumping into the left wall and then continues towards the goal as uncertainty can be reduced by bumping into the top wall. The switching controller moves south after the first bump resulting in overall longer path length.}
    \label{fig:ComparisonSC}
    \vspace{-0.5cm}
\end{figure*}
Next, we analyze the influence of the Lyapunov coefficients $c$ in the asymptotic BCLF and $\eta$ in the finite-time BCLF, respectively. To that end, we shift our focus to the Constrained Bumper environment since, in contrast to the other environments, the robot has multiple different choices of gathering information, i.e. bumping into the top/bottom  or left/right walls. We aim to show that the minimum deviation control proposed in Eq.~\eqref{eq:LyapControl} yields goal oriented information gathering. To that end, we also compare to another baseline based on a switching control (SC) policy, i.e. the robot follows the information gathering policy until it is accurately localized and then switches to the state-based reference. The unfiltered control strategy thus results in
\begin{align*}
    \ub_{\text{SC}} = \mathds{1}_{\mathcal{R}_{\varepsilon}(\bb) > 0} ~\underset{u}{\mathrm{max}}~ \mathbb{E}\{V^*(\bm{F}(\bb, \ub, \bm{w})\} + \mathds{1}_{\mathcal{R}_{\varepsilon}(\bb) \leq 0}~ \ub^{\mathrm{ref}}.
\end{align*}
We run 100 simulations in the Constrained Bumper environment and report the reach and avoid rates as well as the path length accross successful runs (PL).
We first note in Fig.~\ref{fig:boxplots_merged} that as we increase $c$ in the asymptotic BCLF and decrease $\eta$ in the finite time BCLF, the overall path length reduces. This is caused by the constraint formulation that we do not require a fast decrease on the BCLF which allows the robot to gather information at a later point in time which is closer to the goal region. We visualize this qualitatively in Fig.~\ref{fig:ComparisonSC} in which we show two runs with equivalent initial conditions for the switching controller and a BCLF. It can be seen that the switching controller first bumps into the left wall and then navigates south to accurately localize. In contrast, our BCLF enforces a behavior in which the robot knows that it can gather information on the way to the goal which results in shorter path length and thus faster goal reaching time. For both runs, the BCBF keeps the robot safe. Thus, the Lyapunov coefficent serves as a tuning knob to interpolate between information gathering and goal reaching behavior.

However, we also note that increasing the information gathering behavior too much results in lower reach and avoid rates as shown in Table~\ref{tab:PLA_singlecol_PL}. This can be attributed to the fact that a too little focus on IG can result in poor information gathering if the learned BCLF is not accurate enough. That is, the robot moves the mean state towards the goal thinking that it can gather enough information in the future which in turn leads to a timeout since the robot does not accurately localize within the simulation time. We can see that this effect is less visible in the finite time BCLF as we impose the robot to accurately localize within finite time. However, also for the setting $\eta=0.1$, the reach rate drops which can be attributed to the same reason. Based on Thm. \ref{thm:FSCLF} the expected time to localize is in less than $K = \lceil \nicefrac{W_b(\bb_0)}{\eta} \rceil \approx 800$ measurements for $\eta=0.1$. In simulation however, we use a timeout of $10$ seconds with an update rate of 5 Hz which means the robot is not expected to localize within simulation time for $\eta=0.1$.
\begin{figure}
\centering
\begin{tikzpicture}
\begin{axis}[
  width=0.5\textwidth, height=5cm,
  ymin=0,
  ylabel={Path Length in [m]},
  xlabel={Lyapunov Coefficients},
  xtick={1.5,4.5,7.5,10.5,13.5},
      xticklabels={
        {\shortstack{$\textcolor{customblue}{c}=0.9$\\$\textcolor{customred}{\eta}=2$}},
        {\shortstack{$\textcolor{customblue}{c}=0.95$\\$\textcolor{customred}{\eta}=1$}},
        {\shortstack{$\textcolor{customblue}{c}=0.99$\\$\textcolor{customred}{\eta}=0.4$}},
        {\shortstack{$\textcolor{customblue}{c}=0.999$\\$\textcolor{customred}{\eta}=0.1$}},
        {SC},
      },
      xticklabel style={align=center},
  grid=both,
  major grid style={densely dotted},
  tick align=outside,
  tick pos=left,
  ymajorgrids,
  boxplot/draw direction=y,
  boxplot/box extend=0.9, % controls box width
  line width=0.5pt,
  boxplot/every box/.style={solid},
  boxplot/every whisker/.style={solid},
  boxplot/every median/.style={solid},
  boxplot/every cap/.style={solid},
  legend style={
  at={(0.5,0.95)},
      anchor=south,
      legend columns=2,
      draw=none,
      /tikz/every even column/.append style={column sep=1cm},
      font=\small
    },
]

% Helper: one prepared box at a specific x-position
% Usage: \bp{<x>}{<draw>}{<fill>}{<median>}{<q1>}{<q3>}{<loww>}{<upw>}
\newcommand{\bp}[8]{%
\addplot+[
  line width=1pt,
  draw=#2,
  fill=#3,
  boxplot prepared={
    median=#4,
    lower quartile=#5,
    upper quartile=#6,
    lower whisker=#7,
    upper whisker=#8,
  },
] coordinates {};
}
\newcommand{\bpforget}[8]{%
\addplot+[
  forget plot,
  line width=1pt,
  draw=#2,
  fill=#3,
  boxplot prepared={
    median=#4,
    lower quartile=#5,
    upper quartile=#6,
    lower whisker=#7,
    upper whisker=#8,
  },
] coordinates {};
}

% -------- Random example data --------
% Setting 1
\bp{1}{customblue}{lightblue}{7.496}{5.69}{9.31}{4.5}{11.92}
\bp{2}{customred}{lightred}{7.23}{5.88}{9.56}{4.5}{11.47}

\bp{3}{white}{white}{5}{5}{5}{5}{5}

% Setting 2
\bp{4}{customblue}{lightblue}{7.46}{5.37}{9.56}{4.47}{10.59}
\bp{5}{customred}{lightred}{7.24}{4.98}{9.5}{3.43}{11.72}

\bp{6}{white}{white}{5}{5}{5}{5}{5}

% Setting 3
\bp{7}{customblue}{lightblue}{4.02}{2.13}{5.91}{2.22}{8.89}
\bp{8}{customred}{lightred}{3.69}{1.39}{5.99}{0.94}{10.01}

\bp{9}{white}{white}{5}{5}{5}{5}{5}

% Setting 4
\bp{10}{customblue}{lightblue}{3.55}{2.26}{4.84}{1.56}{6}
\bp{11}{customred}{lightred}{2.8}{1.13}{4.47}{0.93}{6.14}

\bp{9}{white}{white}{5}{5}{5}{5}{5}
\addplot+[
  line width=1pt,
  draw=customgray,
  fill=lightgray,
  boxplot prepared={
    draw position=13.5,
    median=7.72,
    lower quartile=5.88,
    upper quartile=9.56,
    lower whisker=4.5,
    upper whisker=11.47,
  },
] coordinates {};

% legend image = filled rectangle
\pgfplotsset{
  legend image code/.code={
    \path[draw=#1, fill=#1, fill opacity=0.6]
      (0cm,-0.12cm) rectangle (0.28cm,0.12cm);
  }
}

\addlegendimage{area legend, draw=customblue, fill=lightblue, fill opacity=0.6}
\addlegendentry{asymptotic}

\addlegendimage{area legend, draw=customred, fill=lightred, fill opacity=0.6}
\addlegendentry{finite-time}

\end{axis}
\end{tikzpicture}
\vspace{-0.7cm}
\caption{Box plots showing median, IQR, and min/max values for the path length for both BCLFs and the switching controller (SC).}
\label{fig:boxplots_merged}
\end{figure}
\begin{table}[ht]
\centering
\caption{Results for the switching controller (SC), asymptotic BCLF coefficient $c$, and finite-time BCLF coefficient $\eta$.}
\label{tab:PLA_singlecol_PL}
\renewcommand{\arraystretch}{1.2}
\setlength{\tabcolsep}{5pt}
\begin{tabular}{l l c c c}
\hline
~ & \textbf{Setting} & \textbf{Reach $\uparrow$} & \textbf{Avoid $\uparrow$} & \textbf{PL [m] $\downarrow$} \\
\hline\hline
SC & -- & 0.95 & 0.97 & $7.54 \pm 1.31$ \\
\hline
\multirow{4}{*}{$c$}
& $0.9$    & 0.80 & 0.97 & $7.38 \pm 1.33$ \\
& $0.95$   & 0.77 & 0.97 & $7.32 \pm 1.39$ \\
& $0.99$   & 0.89 & 0.87 & $4.43 \pm 1.48$ \\
& $0.9999$ & 0.74 & 0.91 & $3.65 \pm 0.97$ \\
\hline
\multirow{4}{*}{$\eta$}
& $2$   & 0.96 & 0.97 & $7.19 \pm 1.66$ \\
& $1$   & 0.96 & 0.97 & $7.12 \pm 1.66$ \\
& $0.4$ & 0.97 & 0.88 & $4.22 \pm 1.92$ \\
& $0.1$ & 0.82 & 0.89 & $3.07 \pm 1.19$ \\
\hline
\end{tabular}
\end{table}

\subsection{Using the BCLF to resolve conflict}
\label{sec:conflict}
Next, we illustrate a failure case of the proposed architecture and show how it can be resolved. In certain scenarios, the information-gathering strategy may conflict with the safety filter. One such case is shown in Fig.~\ref{fig:ConflictRes}, where the robot starts from an initial uniform belief and bumps into the right wall. As a result, the belief becomes multimodal, with part of the probability mass already lying inside the goal set. In this situation, the reference controller attempts to move the robot in the northeast direction, which conflicts with the safety filter because doing so would drive a significant portion of the belief through the unsafe set. Consequently, the robot becomes trapped in a local minimum and remains safely stationary until timeout.

Figure~\ref{fig:ConflictRes} further shows that the BCLF value stagnates once the belief is trapped. This observation motivates using the BCLF as a monitor for the information-gathering process and triggering a conflict-resolution mechanism when a local minimum is detected. In this example, if no information is gathered for more than one second, we switch to the direction of maximum information gain, i.e., the direction that maximizes the decrease of the BCLF. This simple mechanism is sufficient to escape the local minimum and successfully reach the goal.
\begin{figure}
    \centering
    \includegraphics[width=0.48\textwidth]{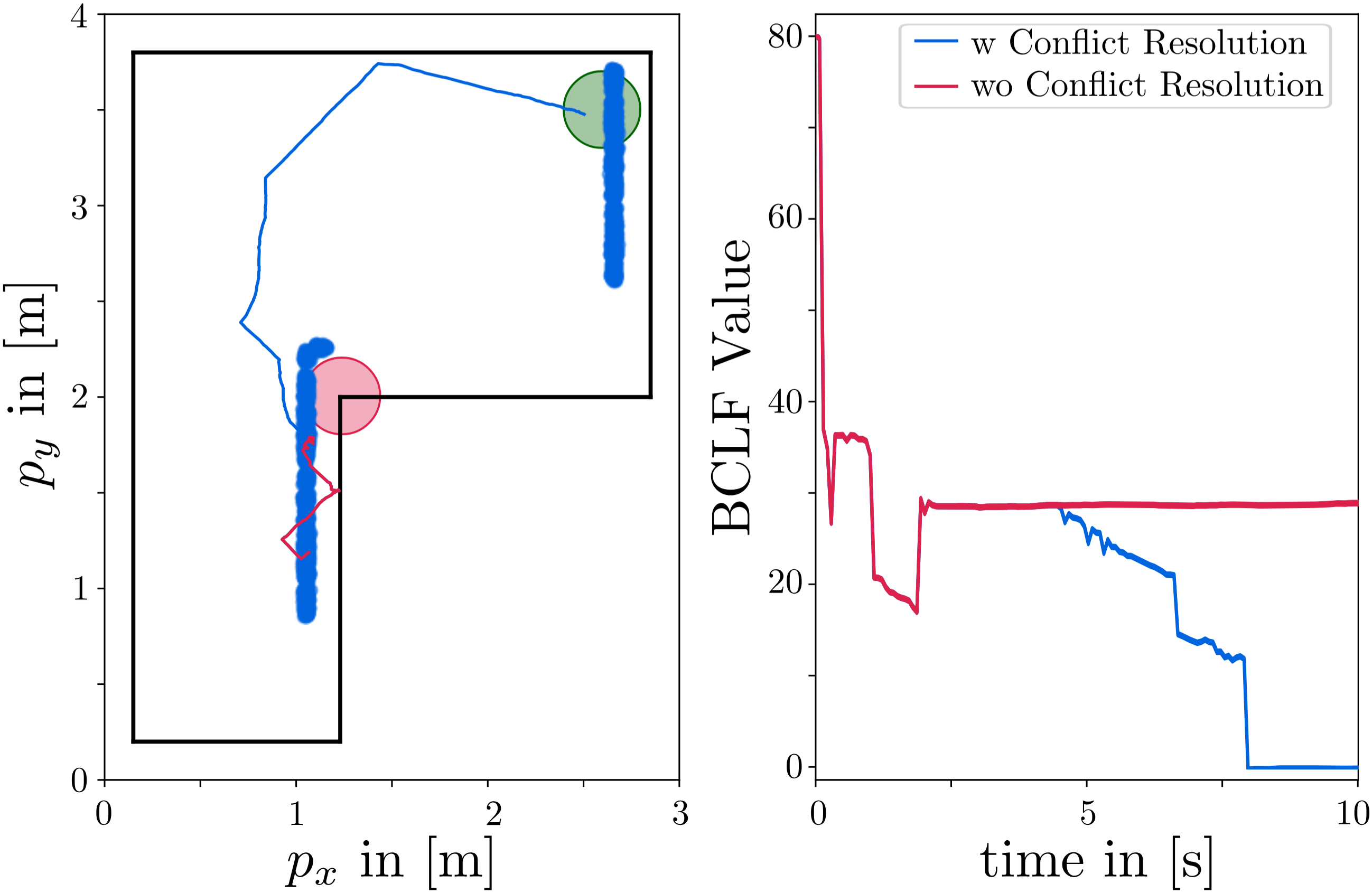}
    \vspace{-0.2cm}
    \caption{Illustration of the conflict resolution in the Constrained Bumper Environment. The particle filter belief being stuck in a local minima is shown by the blue particles. The ground truth trajectory of the state under conflict resolution is shown in blue.}
    \label{fig:ConflictRes}
    \vspace{-0.5cm}
\end{figure}

\subsection{Ablation: Reuse of Learned Lyapunov Function}
\label{sec:ablationReuse}
We further show how the control architecture can be quickly adapted to new tasks. To that end, we formulate a completely different reach-avoid task as illustrated in Fig. \ref{fig:NewTask} where we use the same Constrained Bumper environment but want the robot to perform a circular tracking task while avoiding the unsafe areas. We use exactly the same BCLF and only change the state-based reference controller and the BCBF. We show two different initial beliefs with samples drawn from a Gaussian distribution. Both initial conditions lead to the robot safely fulfilling the desired tracking task. We want to highlight that the depicted Gaussian belief states were never seen during training since an episode always starts from an initial uniform belief as depicted in Fig. \ref{fig:envs}. Thus the learned BCLF is able to generalize to unseen beliefs.
\begin{figure}
    \centering
    \includegraphics[width=0.48\textwidth]{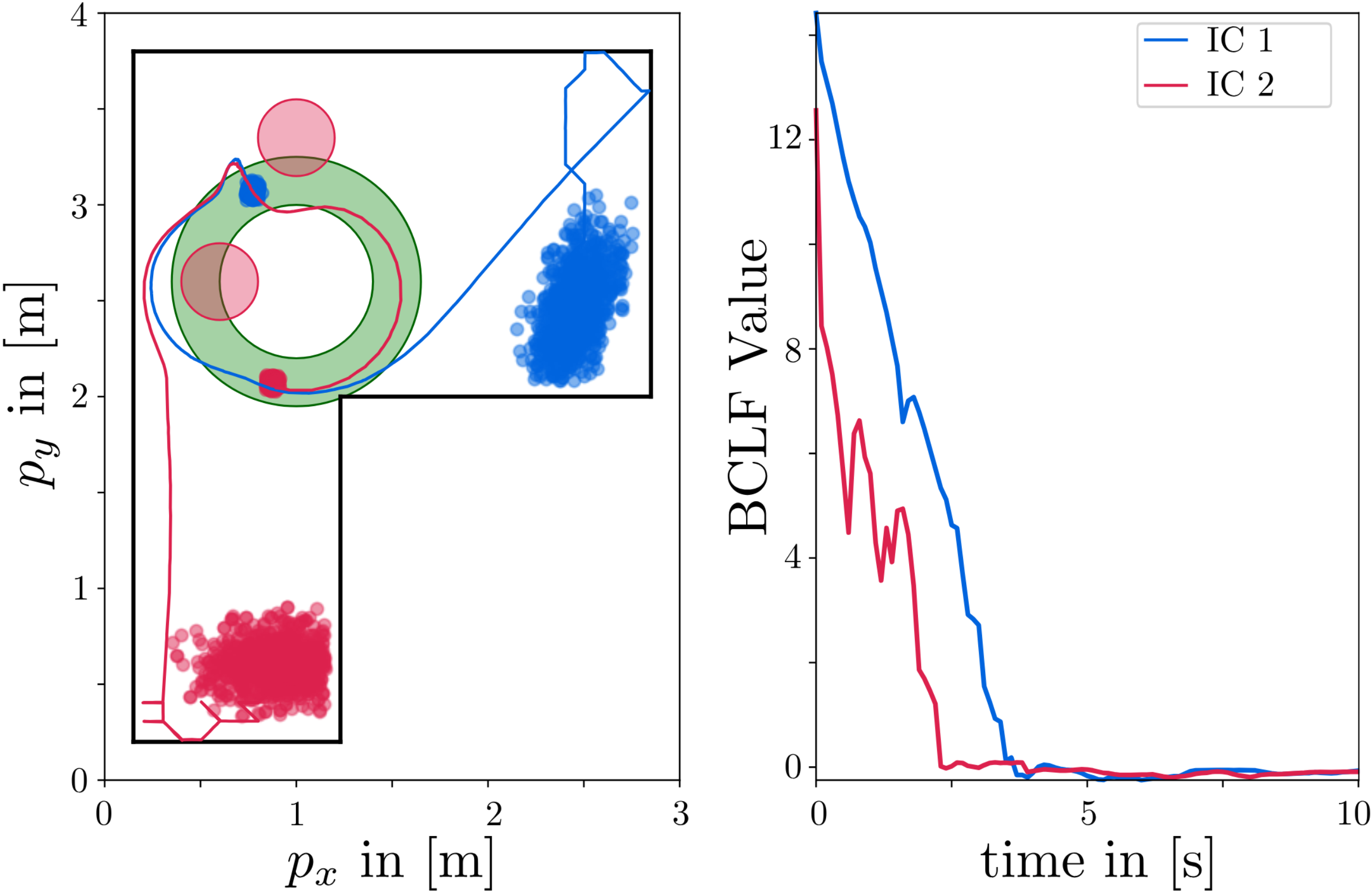}
    \vspace{-0.2cm}
    \caption{Application of the proposed Control Architecture on a different Reach-avoid problem where the robot has to fulfill a circular tracking objective. Different initial conditions (ICs) are shown in blue and red.}
    \label{fig:NewTask}
    \vspace{-0.5cm}
\end{figure}

\subsection{Analysis of Theoretical Properties}
\label{sec:analysis}
In this simulation study, we investigate the learned BCLF on a variation of the example introduced in Example \ref{exm:toy} with a one dimensional state space and a discrete sensing region at $\{x \in \mathbb{R}\mid |x|\leq 1\}$. In order to visualize the learned BCLF, we consider a simplified scenario in which the particle filter belief consists of only two particles $x^{(1)}$ and $x^{(2)}$, respectively. The true state is either one of the particles, so the particle filter belief is reduced to a simple bimodal belief. We consider a time-discretized belief dynamics model using the sampling rate $\delta t = 0.1$ seconds. We train a DQN with a $Q$ function that is parameterized by an MLP with two layers of hidden dimension 256 without the permutation invariant encoder proposed in Sec. \ref{sec:learning}. Since two particles are not enough to apply conformal prediction as in Lemma \ref{lemma:CP}, we define the goal set in belief space as $\mathcal{S}_b = \{\bb \in \mathcal{B}\mid |x^{(1)} - x^{(2)}| \leq 0.1\}$. The discount factor during training is set to $\gamma = 0.99$ and the reward structure is a simple negative reward of $r(\bb, \ub)=-1$ if the belief state is outside of the goal set.

Figure \ref{fig:BCLF} shows the learned BCLF, with the sub-zero level set highlighted in orange. The learned BCLF clearly separates the goal states in belief space, as the sub-zero level set is a subset of the ground-truth set $\mathcal{S}_b$. The optimal information-gathering policy in this scenario is straightforward: if both particles lie to the left or to the right of the sensing region, the optimal action is to move right or left, respectively; if the particles lie on opposite sides, information can be gathered by moving in either direction. This behavior is reflected in the learned BCLF: when the particle signs differ, either direction reduces the BCLF value, whereas when they share the same sign, information can only be gathered by moving toward the sensing region. A $+$-shaped structure emerges in the $\pm 0.1$ band around the particle axes, where at least one particle lies inside the sensing region. In this region the state will be accurately localized after the next measurement, so the value equals the one-step reward of $-1$. The maximum value over the belief space is $W_{\max} = 8.73$, satisfying $W_{\max} \le \nicefrac{|R_{\max}}{1-\gamma} = 100$ for an asymptotic BCLF with minimum Lyapunov coefficient $c_{\min} \approx 0.89$. Thus, the learned BCLF is a valid asymptotic BCLF over the entire bounded belief space. Moreover, $W_{\max} \le \nicefrac{|R_{\max}| - \gamma \eta}{1-\gamma} = 60.4$ for a suitable $\eta = 0.4$, confirming that the finite-time BCLF is also valid on the entire bounded belief space.
\begin{figure}[t]
    \centering
    \includegraphics[width=0.48\textwidth]{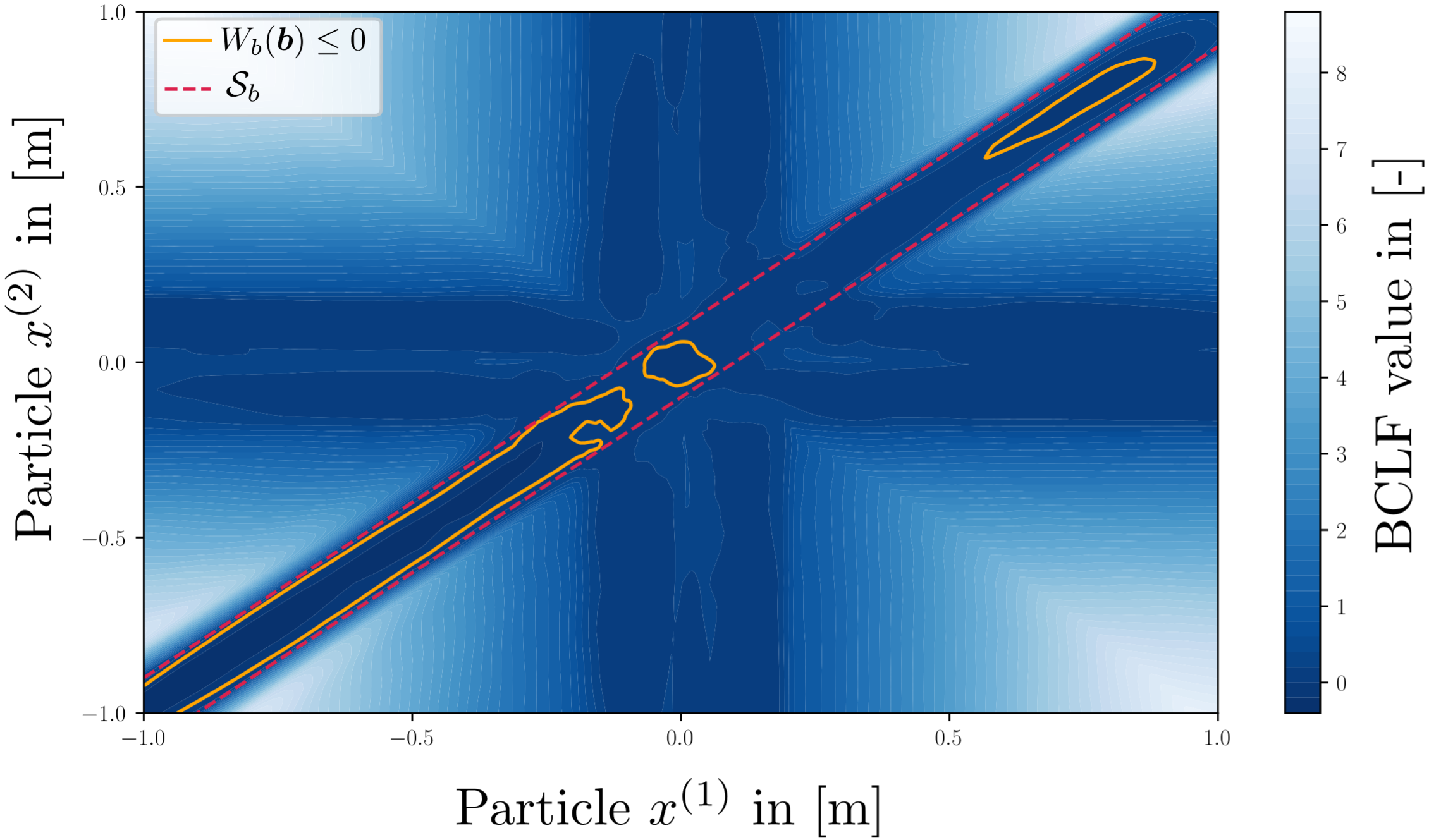}
    \vspace{-0.2cm}
    \caption{Illustration of the learned BCLF for the example two particle system. The colorgradient illustrates higher values of the BCLF, the ground truth goal set is shown in red while the sub-zero levelset of the BCLF is visualized in orange.}
    \vspace{-0.4cm}
    \label{fig:BCLF}
\end{figure}

Beyond visual inspection, we also empirically evaluate the conditions of the BCLFs in Defs.~\ref{def:BCLF} and \ref{def:FTBCLF}. To this end, we sample ten thousand random initial belief states and simulate the system under the optimal information-gathering policy. Table~\ref{tab:BCLFConditions} summarizes the conditions and the corresponding true positive and false positive rates. We find that the subzero level set of the BCLF tends to underapproximate the goal set, as indicated by a true positive rate noticeably below one, which is consistent with the visualization in Fig.~\ref{fig:BCLF}. The BCLF is, however, close to zero for the vast majority of samples inside the goal set $\mathcal{S}_b$. Likewise, positive definiteness holds for almost all samples outside the goal set. Although the BCLF and FBCLF decrease conditions do not hold globally outside the goal set, they remain sufficient to guide the system toward the goal, as all simulated belief trajectories successfully reach $\mathcal{S}_b$. Finally, we observe that the expected upper bound on the hitting time (see Def.~\ref{def:FSCLF}) is overly conservative: most simulations reach the goal set substantially earlier, even though the bound is only required to hold in expectation.

% Sensitivity to hyper parameters.
\subsection{Hardware Experiments}
\label{sec:hardware}
We finally perform hardware validation for a scenario similar to the Constrained Bumper scenario. Here, we consider a space-analogue platform that is able to approximate weightlessness in two dimensions by floating with three air-bearings over an epoxy-coated floor.
During normal operation inside a spacecraft, global localization may be performed using observations of fiducial markers, similar to the Astrobee platform in the International Space Station~\cite{smith2016astrobee}. When these cameras fail, the only available sensor is an onboard IMU. To navigate to its repair station autonomously, the robot needs to interact with its environment similar to the Bumper example.

% On the space platform, we perform an experiment that is similar to the Roomba POMDP, i.e. we consider the scenario where global localization has a failure and the only sensor available is an IMU. In this scenario, we want the robot to navigate to the repair station autonomously. The dynamics model will be given as
\begin{table}[t]
    \centering
    \caption{Performance metrics for function $W_b(\bb)$ across sampled points}
    \begin{tabular}{lcccc}
        \toprule
        Condition & TPR & FPR  \\
        \midrule
        $W_b(\bb) \leq 0$ for $\bb \in \mathcal{S}_b$ & 0.417 & $1e^{-4}$ \\
        $W_b(\bb) > 0$ for $\bb \notin \mathcal{S}_b$ & 0.999 & 0.583 \\
        $W_b(\bb_{k+1}) \leq c W_b(\bb_k)$ for $\bb_k \notin \mathcal{S}_b$ & 0.982 & -- \\
        $\Delta W_b(\bb_k) \leq -\min(W_b(\bb_k), \eta)$ for $\bb_k \notin \mathcal{S}_b$ & 0.95 & -- \\
        Reach $\mathcal{S}_b$ & 1.0 & -- \\
        Reach $\mathcal{S}_b$ within upper bound $T <= \mathbb{E}\{K(\bb_0)\}$ & 0.983 & -- \\
        \bottomrule
    \end{tabular}
    \label{tab:BCLFConditions}
\end{table}
\begin{figure*}
    \centering
    \includegraphics[width=\textwidth]{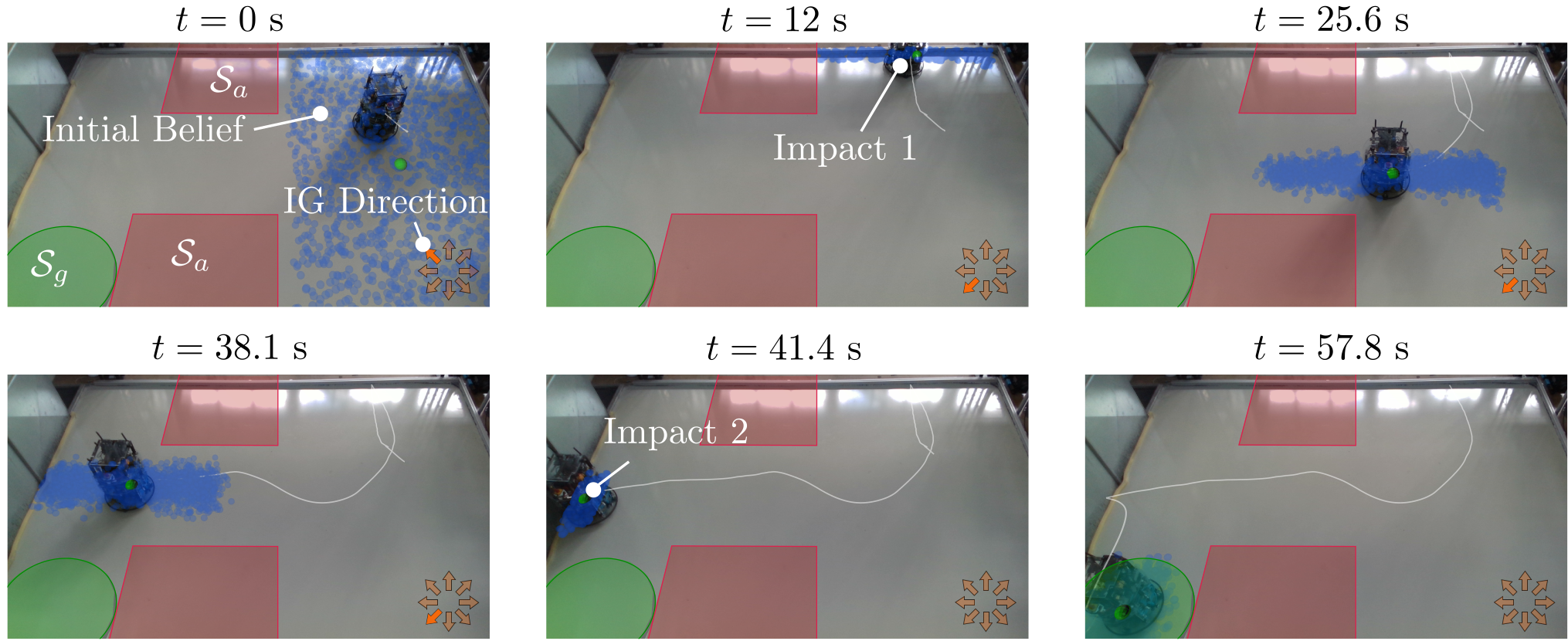}
    \caption{Hardware validation for a reach-avoid scenario with a narrow passage. To navigate to the goal region $\mathcal{S}_g$, the vertical uncertainty in the position ($t=0$s) needs to be reduced which is done with a bump with the top of the workspace ($t=12$s). The belief narrows sufficiently and passage with probabilistic safety guarantees can be facilitated ($t=25.6$s). In order to also provide the reach behavior with the desired probabilistic guarantee, a second bump ($t=41.4$s) narrows the belief in the horizontal direction.}
    \label{fig:SpaceExperiment}
\end{figure*}

The dynamics of the space platform are given as stochastic double integrator dynamics
\begin{align}
    \begin{bmatrix}
        \text{d}\bm{p}\\
        \text{d}\bm{v}
    \end{bmatrix} &= \begin{bmatrix}
        \bm{v}\\
        \bm{u}
    \end{bmatrix} \text{d}t + \bm{\sigma} \text{d}\bm{W}\\
    z &= \{0, 1\}
\end{align}
where $\bm{p}\in \mathbb{R}^2$ denotes the robot's position, $\bm{v}\in \mathbb{R}^2$ is its velocity and $\ub \in \mathbb{R}^2$ is an acceleration control input. 
Similar to the original Constrained Bumper Scenario, we assume we have access to the map and the goal and avoid regions.
% \textcolor{blue}{need to mention here that we know the map and that the robot is only allowed to bump into the left/right and uppper wall as it gets stuck on the lower one. Also, we didn't use the IMU for odometry estimates but noisy MOCAP. Should write that as well.}
Constrained by the quality of the epoxy-coated floor, we only allow uncertainty reduction via impacts with the left, right and upper wall, as the air-bearings get stuck near other parts of the border.
We utilize onboard IMU measurements for detecting impacts with the border and artificially noise-corrupted motion-capture data to estimate the robot’s velocity. We learn the BCLF purely in simulation and directly transfer it to hardware.
We solve the information-gathering control at $10$Hz and the BCBF safety filter at $50$Hz with the belief represented by $8000$ particles.

The first scenario, shown in Fig.~\ref{fig:SpaceExperiment}, involves a repair station $\mathcal{S}_g$ that can only be reached through a narrow corridor. From an initial uniform particle belief, the space platform reduces its uncertainty via a bump at the top of the workspace, after which the distribution is narrow enough to pass the corridor. A second bump after passage ensures the distribution is narrow enough to guarantee the reach-behavior with the desired probability.

A second reach-avoid scenario, shown in Fig.~\ref{fig:FirstPage}, highlights how different initial conditions (under the same initial belief $\bb^0$) lead to qualitatively different trajectories. The initial location $\xb_0^1$ is far enough to the right in the initial belief that only a single bump is required with the top of the workspace. Conversely, $\xb_0^2$ experiences an additional bump with the left side of the workspace after which the uncertainty is still too significant in the vertical direction to enable the avoid and reach behavior. This leads to a second bump at the top of the workspace after which the uncertainty is low enough to reach the goal.

Note that between these scenarios, the BCLF did not need to be retrained as its information-gathering behavior is only dependent on the shape of the workspace and is robust to different initial beliefs.
The scenarios presented here highlight the applicability of the proposed control architecture to real-world robotics challenges. Thanks due to efficient QP formulations in Eq.~\ref{eq:LyapControl} and Algorithm~\ref{alg:conformal_qp}, a real-time control rate is achievable with a large number of particles leading to safe deployment under partial observability.
% \textcolor{red}{Do we actually want to say anything more because we don't quantitatively asses the safety for example. So how robust are we really?}

% A second scenario, could be a simple end-effector that has a camera attached and tries to find a desired object in an occluded space. Here we could also train on a variety of environments, e.g. environments where the occluded area changes.

\section{CONCLUSIONS AND FUTURE WORK}
This paper presented a layered, certificate-based control architecture for reach-avoid POMDPs that operates directly in belief space. By decoupling goal reaching, information gathering, and safety into separate modules, we showed how conflicting objectives under partial observability can be addressed in a principled and scalable manner. Central to this architecture are Belief Control Lyapunov Functions (BCLFs) for information gathering and Belief Control Barrier Functions (BCBFs) for risk-aware safety, each providing their own probabilistic guarantees and operating at appropriate time scales.

We formalized information gathering as a Lyapunov convergence problem in belief space and established theoretical conditions under which reinforcement learning value functions induce valid stochastic and finite-time BCLFs. For safety, we extended belief-space CBFs with conformal prediction to obtain finite-horizon probabilistic safety guarantees, moving beyond pointwise-in-time assurances. The resulting control synthesis reduces to lightweight optimization problems that can be solved in real time even for high-dimensional, non-Gaussian belief representations.

Through extensive simulation studies and hardware experiments on a space robotics-platform, we demonstrated that the proposed architecture improves both safety and task performance compared to state-of-the-art constrained POMDP solvers. In particular, the modular design enables effective information gathering, resolves conflicts between safety and exploration, and supports reuse of learned BCLFs across different tasks without retraining.

Future work will focus on improving scalability to higher-dimensional state spaces by exploring belief representations beyond standard particle filters, such as Stein variational particle filters~\cite{maken2022stein}. Another promising direction is to leverage the reasoning capabilities of foundation models to maintain a lower-dimensional belief over the environment. Such compact belief representations could then be used to learn BCLFs that enable information gathering in complex, visually rich environments.

In addition, it would be valuable to further investigate the probabilistic guarantees of the overall control architecture and the interactions between the different control modules. Since the BCLFs considered in this work are based on action-value functions with discretized action spaces, an important extension is to develop BCLFs for continuous action spaces, for example by leveraging algorithms such as TD3~\cite{fujimoto2018addressing} that learn continuous Q-functions. Finally, we plan to study formal and probabilistic verification of learned BCLFs, for instance by incorporating an additional layer of conformal prediction as proposed in~\cite{lin2024verification}.

\balance
\bibliographystyle{IEEEtran}
\bibliography{references}

\section{Appendix}
\subsection{Proof of Theorem \ref{thm:BCLF_value}}
\label{sec:proof_BCLF}
\begin{proof}
    First, we show that the CLF candidate is greater than zero everywhere outside the goal set and zero inside. 
    If the state is inside the goal set, the reward is always $r(\xb, \ub) = 0$, hence 
    \begin{align}
        W\ofx = -V^*\ofx = -\sum_{k=0}^{\infty} \gamma^k r(\xb_k, \ub_k) = 0.
    \end{align}
    If the state is outside the goal set, and we have strictly negative rewards for all $\xb \notin \mathcal{S}_g$, we obtain $W\ofx = -V^*\ofx > 0$, thus the first conditions of a stochastic CLF in Def \ref{def:SCLF} hold.

    Next, we show the expected decrease condition. 
    From the Bellman equation, we have
    \begin{align}
        Q^*(\xb,\ub) = \mathbb{E}_w\!\left\{\, r(\xb,\ub) + \gamma V^*\!\left(\bm{F}(\xb,\ub,\bm{w})\right) \right\}.
    \end{align}
    Substituting $W\ofx = -V^*\ofx$ yields
    \begin{align}
        \mathbb{E}_w\!\left\{\, W\!\left(\bm{F}(\xb,\ub,\bm{w})\right) \right\} 
        = \frac{1}{\gamma}\!\left( r(\xb,\ub) - Q^*(\xb,\ub) \right).
    \end{align}
    For the optimal control input $\ub^*(\xb)$, we have $Q^*(\xb,\ub^*(\xb)) = V^*\ofx = -W\ofx$, which gives
    \begin{align}
        \mathbb{E}_w\!\left\{\, W\!\left(\bm{F}(\xb,\ub^*(\xb),\bm{w})\right) \right\}
        \le \frac{1}{\gamma}\!\left( R_{\max} + W\ofx \right).
    \end{align}
    For all $\xb \in \mathcal{D}$, we have $W\ofx \le W_{\max}$, so that
    \begin{align}
        \mathbb{E}_w\!\left\{\, W\!\left(\bm{F}(\xb,\ub^*(\xb),\bm{w})\right) \right\}
        &\le \frac{1}{\gamma}\!\left( 1 + \frac{R_{\max}}{W_{\max}} \right) W\ofx 
        \\
        &= c_{\min}\, W\ofx.
    \end{align}
    Since $R_{\max} < 0$ and $W_{\max} < |R_{\max}| / (1 - \gamma)$, it follows that $c_{\min} < 1$. 
    Hence, there exists a constant $c \in [c_{\min}, 1)$ such that
    \begin{align}
        \mathbb{E}_w\!\left\{\, W\!\left(\bm{F}(\xb,\ub,\bm{w})\right) \right\} \le c\, W\ofx,
    \end{align}
    for some control input $\ub \in \mathcal{U}$ (in particular, $\ub^*(\xb)$).
    Since $W\ofx > 0$ for all $\xb \notin \mathcal{S}_g$ and $W\ofx = 0$ for $\xb \in \mathcal{S}_g$, 
    all SCLF conditions in Def. \ref{def:SCLF} hold.
    Therefore, $W\ofx = -V^*\ofx$ is a stochastic Control Lyapunov Function on $\mathcal{D}$.
\end{proof}

\subsection{Proof of Theorem \ref{thm:FBCLF_value}}
\label{sec:proof_FBCLF}
\begin{proof}
    By the same logic of the proof in Sec. \ref{sec:proof_BCLF}, we know that $W\ofx \leq 0~\forall \xb \in \mathcal{S}_g$ and $W\ofx >0~\forall \xb \notin \mathcal{S}_g$. Further, we can obtain
    \begin{align}
        \Delta W &:= \mathbb{E}_w\!\left\{\, W\!\left(\bm{F}(\xb,\ub^*(\xb),\bm{w})\right) \right\} - W\ofx\\
        &\leq \frac{1}{\gamma}\!\left( R_{\max} + (1-\gamma)W\ofx \right)=: \phi(W).
    \end{align}
    Note that $\phi(W)$ is affine and strictly increasing in $W$ since $1-\gamma>0$. For fixed-time convergence, we need to show that $\Delta W \leq -\min(W\ofx, \eta)$. First, we consider the region $0< W\ofx \leq \eta$ for which we require $\Delta W \leq - W$. As $\phi(W)$ is strictly increasing, it suffices to check at the worst case $W\ofx = \eta$, i.e. $\phi(\eta)\leq - \eta$:
    \begin{align}
        \frac{(1-\gamma)\eta + R_{\max}}{\gamma} &\leq -\eta\\
        \Leftrightarrow (1-\gamma)\eta + R_{\max} &\leq -\gamma \eta\\
        \Leftrightarrow\eta + R_{\max} &\leq 0
    \end{align}
    which holds by assumption. Next, consider the region $\eta \leq W\ofx \leq W_{\max}$ for which it, again, suffices to check $\phi(W_{\max}) \leq -\eta$:
    \begin{align}
        \frac{(1-\gamma)W_{\max} + R_{\max}}{\gamma} &\leq -\eta\\
        \Leftrightarrow \frac{(|R_{\max}| -\gamma\eta) + R_{\max}}{\gamma} &\leq -\eta\\
        \Leftrightarrow-\eta &\leq -\eta
    \end{align}
    which concludes that $W\ofx=-V^*\ofx$ is a local finite-time stochastic CLF on the domain $\mathcal{D}$. 
\end{proof}
\subsection{Proof of Proposition \ref{prop1}}
\label{sec:proofprop}
The proof is adapted from~\cite{yang2023safe}.
\begin{proof}
    We start with
    \begin{align}
        &~~~~~\mathrm{Pr} [\xb_t \notin \mathcal{S}_a, \forall t \in \cup_{k=1}^M I_k]\\
        &=\mathrm{Pr} [\xb_t \notin \mathcal{S}_a, \forall t \in I_M \mid \xb_t \notin \mathcal{S}_a, \forall t \in \cup_{k=1}^{M-1} I_k]\\
        &\cdot \mathrm{Pr} [\xb_t \notin \mathcal{S}_a, \forall t \in \cup_{k=1}^{M-1} I_k]\\
        &= \mathrm{Pr} [\xb_t \notin \mathcal{S}_a, \forall t \in I_M \mid \xb_{t_{M-1}} \notin \mathcal{S}_a]\\
        &\cdot \mathrm{Pr} [\xb_t \notin \mathcal{S}_a, \forall t \in \cup_{k=1}^{M-1} I_k]\\
        &= (1 - \bar{\delta}_a) \cdot \mathrm{Pr} [\xb_t \notin \mathcal{S}_a, \forall t \in \cup_{k=1}^{M-1} I_k]
    \end{align}
    By Assumption~\ref{ass:iid}, we assume that a discrete transition on the belief state does not leave the safe set, thus we can apply this result recursively, to obtain
    \begin{align}
        \mathrm{Pr} [\xb_t \notin \mathcal{S}_a, \forall t \in \cup_{k=1}^M I_k] &= (1 - \bar{\delta}_a)^M \cdot \mathrm{Pr}[\xb_0 \notin \mathcal{S}_a]\\
        &= (1 - \bar{\delta}_a)^M.
    \end{align}
\end{proof}

\end{document}